\newtheorem{Theorem}{Theorem}
\newtheorem{Lemma}{Lemma}
\newtheorem{Assumption}{Assumption}
\newtheorem{Proposition}{Proposition}
\newtheorem{Remark}{Remark}
\newtheorem{Corollary}{Corollary}
\newlist{assumenum}{enumerate}{1} 
\setlist[assumenum]{label={\rm (\roman*)}, ref=(\roman*)}
\newcommand{\Eqref}[1]{Eq. \eqref{#1}}
\newcommand{\calA}{{\mathcal A}}
\newcommand{\calE}{{\mathcal E}}
\newcommand{\calF}{{\mathcal F}}
\newcommand{\calH}{{\mathcal H}}
\newcommand{\calL}{{\mathcal L}}
\newcommand{\fhat}{\widehat{f}}
\newcommand{\conv}{\mathrm{conv}}
\newcommand{\Px}{P_{X}}
\newcommand{\LPiPx}{L_2(\Px)}
\newcommand{\Real}{\mathbb{R}}
\newcommand{\Natural}{\mathbb{N}}
\newcommand{\EE}{\mathrm{E}}
\newcommand{\dd}{\mathrm{d}}
\newcommand{\fstar}{f^{\ast}}
\newcommand{\ftrue}{f^{\mathrm{o}}}
\def\I<#1>{\left\langle #1 \right\rangle}
\def\i<#1>{\left\langle #1 \right\rangle}
\newcommand{\supp}{\mathrm{supp}}
\newcommand{\calLhat}{\widehat{\calL}}
\newcommand{\cH}{\mathcal{H}}
\newcommand{\norm}[1]{\|#1\|}
\newcommand{\Wstar}{W^*}
\newcommand{\nutil}{\tilde{\nu}_{\beta}}
\newcommand{\alphatil}{\tilde{\alpha}}
\newcommand{\epsilonstar}{{\epsilon^*}}
\newcommand{\wm}[1]{w_{#1}}
\newcommand{\wmb}[1]{\bar{w}_{#1}}
\newcommand{\aone}{\alpha_1}
\newcommand{\atwo}{\alpha_2}
\newcommand{\mn}{{m_n}}
\newcommand{\convbar}{\overline{\conv}}
\newcommand{\Rlin}{R_{\mathrm{lin}}}
\newcommand{\Rnn}{R_{\mathrm{NN}}}
\newcommand{\Id}{\mathrm{I}}
\newcommand{\Rbar}{\bar{R}}
\newcommand{\GLD}{{NGD}\xspace}
\title{Benefit of deep learning with non-convex noisy gradient descent: Provable excess risk bound and superiority to kernel methods}
\author{Taiji Suzuki \\
Graduate School of Information Science and Technology, The University of Tokyo, Japan \\
Center for Advanced Intelligence Project, RIKEN, Japan\\
\texttt{taiji@mist.i.u-tokyo.ac.jp} \\ 
\AND
Shunta Akiyama \\
Graduate School of Information Science and Technology, The University of Tokyo, Japan \\
\texttt{akiyama@mist.i.u-tokyo.ac.jp}
%
}
\begin{document}

\maketitle


\begin{abstract}
Establishing a theoretical analysis that explains why deep learning can outperform shallow learning such as kernel methods is one of the biggest issues in the deep learning literature.
Towards answering this question, we evaluate excess risk of a deep learning estimator trained by a noisy gradient descent with ridge regularization on a mildly overparameterized neural network, 
and discuss its superiority to a class of linear estimators that includes neural tangent kernel approach, random feature model, other kernel methods, $k$-NN estimator and so on.
We consider a teacher-student regression model, 
and eventually show that {\it any} linear estimator can be outperformed by deep learning in a sense of the minimax optimal rate especially for a high dimension setting. 
The obtained excess bounds are so-called fast learning rate which is faster than $O(1/\sqrt{n})$ that is obtained by usual Rademacher complexity analysis.
This discrepancy is induced by the non-convex geometry of the model and the noisy gradient descent used for neural network training provably reaches a near global optimal solution even though the loss landscape is highly non-convex. 
Although the noisy gradient descent does not employ any explicit or implicit sparsity inducing regularization, it shows a preferable generalization performance that dominates linear estimators.
\end{abstract}

\section{Introduction}


In the deep learning theory literature, clarifying the mechanism by which deep learning can outperform shallow approaches has been gathering most attention for a long time.
In particular, it is quite important to show that a tractable algorithm for deep learning can provably achieve a better generalization performance than shallow methods. 
Towards that goal, we study the rate of convergence of {\it excess risk} of both deep and shallow methods
in a setting of a nonparametric regression problem.
One of the difficulties to show generalization ability of deep learning with certain optimization methods is that 
the solution is likely to be stacked in a bad local minimum, which prevents us to show its preferable performances.
Recent studies tackled this problem by considering optimization on overparameterized networks as in neural tangent kernel (NTK) \citep{jacot2018neural,du2019gradient} and mean field analysis \citep{nitanda2017prticle,chizat2018note,NIPS:Rotskoff&Vanden-Eijnden:2018,rotskoff2019trainability,MeiE7665,pmlr-v99-mei19a}, or analyzing the noisy gradient descent such as stochastic gradient Langevin dynamics (SGLD) \citep{Welling_Teh11,Raginsky_Rakhlin_Telgarsky2017,NIPS2018_8175}.

The NTK analysis deals with a relatively large scale initialization so that the model is well approximated by the tangent space at the initial solution, and eventually, all analyses can be reduced to those of kernel methods \citep{jacot2018neural,du2018gradient,allen2019convergence,du2019gradient,arora2019fine,cao2019generalization,zou2020gradient}.
Although this regime is useful to show its global convergence, 
the obtained estimator looses large advantage of deep learning approaches because 
the estimation ability is reduced to the corresponding kernel methods. 
To overcome this issue, there are several ``beyond-kernel'' type analyses. 
For example, \cite{AllenZhu:NIPS2019:BeyondKernel,allen2020backward} showed benefit of depth by analyzing ResNet type networks.
\cite{pmlr-v125-li20a} showed global optimality of gradient descent by reducing the optimization problem to a tensor decomposition problem for a specific regression problem, 
and showed the ``ideal'' estimator on a linear model has worse dependency on the input dimensionality.
\cite{Bai2020Beyond} considered a second order Taylor expansion and showed that the sample complexity of deep approaches has better dependency on the input dimensionality than kernel methods.
\cite{chen2020understanding} also derived a similar conclusion by considering a hierarchical representation.
The analyses mentioned above actually show some superiority of deep learning, but all of these bounds are essentially $\Omega(1/\sqrt{n})$ where $n$ is the sample size, which is not optimal for regression problems with squared loss \citep{FCM:Caponetto+Vito:2007}. 
The reason why only such a sub-optimal rate is considered is that the target of their analyses is mostly the Rademacher complexity of the set in which estimators exist for bounding the generalization gap. 
However, to derive a tight {\it excess risk} bound instead of the generalization gap, we need to evaluate so called {\it local Rademacher complexity} \citep{IEEEIT:Mendelson:2002,LocalRademacher,Koltchinskii}
(see \Eqref{eq:ExcessRiskDefinition} for the definition of excess risk). 
Moreover, some of the existing analyses should change the target function class as the sample size $n$ increases, 
for example, the input dimensionality is increased against the sample size, 
which makes it difficult to see how the rate of convergence is affected by the choice of estimators.

Another promising approach is the mean field analysis. 
There are also some work that showed superiority of deep learning against kernel methods. 
\cite{ghorbani2019linearized} showed that, when the dimensionality $d$ of input is polynomially increasing with respect to $n$, the kernel methods is outperformed by neural network approaches. 
Although the situation of increasing $d$ explains well the modern high dimensional situations, 
this setting blurs the rate of convergence.
Actually, we can show the superiority of deep learning even in a {\it fixed dimension} setting.

There are several studies about approximation abilities of deep and shallow models.
\cite{ghorbani2020neural} showed adaptivity of kernel methods to the intrinsic dimensionality in terms of approximation error and discuss difference between deep and kernel methods. 
\cite{NIPS2019_8886} showed that the random feature method requires exponentially large number of nodes against the input dimension to obtain a good approximation for a single neuron target function. 
These are only for approximation errors and 
estimation errors are not compared.



Recently, the superiority of deep learning against kernel methods has been discussed also in the nonparametric statistics literature 
where the minimax optimality of deep learning in terms of excess risk is shown.
Especially, it is shown that deep learning achieves better rate of convergence than {\it linear estimators} in several settings
\citep{AoS:Schmidt-Hieber:2020,suzuki2018adaptivity,pmlr-v89-imaizumi19a,suzuki2019deepIntrinsicDim,HAYAKAWA2020343}.
Here, the linear estimators are a general class of estimators that includes kernel ridge regression, $k$-NN regression and Nadaraya-Watson estimator.
Although these analyses give clear statistical characterization on estimation ability of deep learning,
they are not compatible with tractable optimization algorithms.

In this paper, we give a theoretical analysis that unifies these analyses and
shows the superiority of a deep learning method trained by a tractable noisy gradient descent algorithm.  
We evaluate the excess risks of the deep learning approach and linear estimators in a nonparametric regression setting, 
and show that the minimax optimal convergence rate of the linear estimators can be dominated by the noisy gradient descent on neural networks. 
In our analysis, the model is fixed and no explicit sparse regularization is employed.
Our contributions can be summarized as follows:
{\vspace{-0.05cm}
\begin{itemize}[leftmargin=*,topsep=0.5mm,itemsep=0.2mm] 
\item A refined analysis of excess risks for a fixed model with a fixed input dimension is given to compare deep and shallow estimators. 
Although several studies pointed out the curse of dimensionality is a key factor that separates shallow and deep approaches, we point out that such a separation appears in a rather low dimensional setting,
and more importantly, the {\it non-convexity} of the model essentially makes the two regimes different.
\item A lower bound of the excess risk which is valid for {\it any} linear estimator is derived.
The analysis is considerably general because the class of linear estimators includes kernel ridge regression with any kernel and thus it also includes estimators in the NTK regime. 
\item 
All derived convergence rate is a fast learning rate that is faster than $O(1/\sqrt{n})$.
We show that simple noisy gradient descent on a sufficiently wide two-layer neural network achieves a fast learning rate by using a fact that the solution converges to a Bayes estimator with a Gaussian process prior, and the derived convergence rate can be faster than that of linear estimators. 
This is much different from such existing work that compared only coefficients with the same rate of convergence with respect to the sample size $n$.
\end{itemize}
\vspace{-0.05cm}
}

\paragraph{Other related work} \cite{JMLR:v18:14-546} analyzed the model capacity of neural networks and its corresponding reproducing kernel Hilbert space (RKHS), and showed that the RKHS is much larger than the neural network model. 
However, separation of the estimation abilities between shallow and deep is not proven. 
Moreover, the analyzed algorithm is basically the Frank-Wolfe type method which is not typically used in practical deep learning. The same technique is also employed by \cite{barron1993universal}.
The Frank-Wolfe algorithm is a kind of sparsity inducing algorithm that is effective for estimating a function in a model with an $L_1$-norm constraint.
It has been shown that explicit or implicit sparse regularization such as $L_1$-regularization is beneficial to obtain better performances of deep learning under certain situations \citep{chizat2020implicit,chizat2019sparse,gunasekar2018implicit,pmlr-v125-woodworth20a,klusowski2016risk}.
For example, \cite{weinan2019comparative,ma2019priori} showed that the approximation error of a linear model suffers from the curse of dimensionality in a setting where the target function is in the Barron class  \citep{barron1993universal}, and showed an $L_1$-type regularization avoids the curse of dimensionality.
However, our analysis goes in a different direction where a sparse regularization is not required.

\section{Problem setting and model}

In this section, we give the problem setting and notations that will be used in the theoretical analysis.
We consider the standard nonparametric regression problem where 
data are generated from the following model for an unknown true function $\ftrue:\Real^d \to \Real$:  
\begin{align}\label{eq:ObsModel}
y_i = \ftrue(x_i) + \epsilon_i~~~(i=1,\dots,n),
\end{align}
where $x_i$ is independently identically distributed from $\Px$ whose support is included in $\Omega = [0,1]^d$,
and $\epsilon_i$ is an observation noise that is independent of $x_i$ and satisfies $\EE[\epsilon_i] = 0$ and $\epsilon_i \in [-U,U]$ almost surely. 
The $n$ i.i.d. observations are denoted by $D_n = (x_i,y_i)_{i=1}^n$.
We want to estimate the true function $\ftrue$ through the training data $D_n$.
To achieve this purpose, we employ the squared loss $\ell(y,f(x)) = (y-f(x))^2$ and accordingly we 
define the expected and empirical risks as 
$\calL(f) := \EE_{Y,X}[\ell(Y,f(X))]$ and $\calLhat(f) := \frac{1}{n} \sum_{i=1}^n \ell(y_i,f(x_i))$ respectively. 
Throughout this paper, we are interested in the {\it excess (expected) risk} of an estimator $\fhat$ defined by 
\begin{align}\label{eq:ExcessRiskDefinition}
\text{(Excess risk)} & &\calL(\fhat) - \inf_{f:\text{measurable}} \calL(f). &  &  
\end{align}
Since the loss function $\ell$ is the squared loss, the infimum of $\inf_{f:\text{measurable}} \calL(f)$ is achieved by $\ftrue$:  $\inf_{f:\text{measurable}} \calL(f) = \calL(\ftrue)$.
The population $L_2(\Px)$-norm is denoted by $\|f\|_{\LPiPx} := \sqrt{\EE_{X\sim \Px}[f(X)^2]}$
and the sup-norm on the support of $\Px$
is denoted by $\|f\|_\infty := \sup_{x \in \supp(\Px)}|f(x)|$.
We can easily check that for an estimator $\fhat$, the $L_2$-distance $\|\fhat - \ftrue\|_{\LPiPx}^2$ between the estimator $\fhat$ and the true function $\ftrue$ is identical to the excess risk: $\calL(\fhat) - \calL(\ftrue) = \|\fhat - \ftrue\|_{\LPiPx}^2$.
Note that the excess risk is different from the generalization gap $\calL(\fhat) - \calLhat(\fhat)$.
Indeed, the generalization gap typically converges with the rate of $O(1/\sqrt{n})$ which is optimal in a typical setting  \citep{mohri2012foundations}.
On the other hand, the excess risk can be faster than $O(1/\sqrt{n})$, which is known as a {\it fast learning rate}
\citep{IEEEIT:Mendelson:2002,LocalRademacher,Koltchinskii,gine2006concentration}.

\subsection{Model of true functions}

To analyze the excess risk, we need to specify a function class (in other words, model) in which the true function $\ftrue$ is included. In this paper, we only consider a two layer neural network model, whereas the techniques adapted in this paper can be directly extended to deeper neural network models.
We consider a teacher-student setting, that is, the true function $\ftrue$ can be represented by a neural network 
defined as follows.
For $w \in \Real$, let $\bar{w}$ be a ``clipping'' of $w$ defined as $\bar{w} := R\times \tanh(w/R)$ 
where $R \geq 1$ is a fixed constant, and let $[x;1] := [x^\top,1]^\top$ for $x \in \Real^d$.
Then, the teacher network is given by 
$$
\textstyle 
f_W(x) = \sum_{m=1}^\infty a_m \wmb{2,m} \sigma_m(\wm{1,m}^\top [x;1]),
$$
where $\wm{1,m} \in \Real^{d+1}$ and $\wm{2,m} \in \Real$ ($m\in \Natural$) are the trainable parameters
(where $W = (\wm{1,m},\wm{2,m})_{m=1}^\infty$), $a_m \in \Real~(m \in \Natural)$ is a fixed scaling parameter, and $\sigma_m:\Real \to \Real$ is an activation function for the $m$-th node.
The reason why we applied the clipping operation to the parameter of the second layer is just for a technical reason to ensure convergence of Langevin dynamics. The dynamics is bounded in high probability in practical situations and the boundedness condition would be removed if further theoretical development of infinite dimensional Langevin dynamics would be achieved.

Let $\calH$ be a set of parameters $W$ such that its squared norm is bounded: $\calH := \{W =(\wm{1,m},\wm{2,m})_{m=1}^\infty  \mid \sum_{m=1}^\infty (\|\wm{1,m}\|^2 + \wm{2,m}^2) < \infty \}$.
Define $\|W\|_{\calH} := 
[\sum_{m=1}^\infty (\|\wm{1,m}\|^2 + \wm{2,m}^2)]^{1/2}$
for $W \in \calH$.
Let $(\mu_m)_{m=1}^\infty$ be a regularization parameter such that $\mu_m \searrow 0$.
Accordingly we define $\calH_{\gamma} := \{W \in \calH \mid  \|W\|_{\calH_\gamma}< \infty \}$
where $\|W\|_{\calH_\gamma} :=
[\sum_{m=1}^\infty \mu_m^{-\gamma}(\|\wm{1,m}\|^2 + \wm{2,m}^2)]^{1/2}
$ for a given $0 < \gamma $.
Throughout this paper, we analyze an estimation problem in which the true function is included in the following model:
$$
\calF_\gamma = \{ f_W \mid W \in \calH_{\gamma}, ~\|W\|_{\calH_\gamma} \leq 1 \}.
$$

This is basically two layer neural network with infinite width.
As assumed later, $a_m$ is assumed to decrease as $m\to \infty$. Its decreasing rate controls the capacity of the model. 
If the first layer parameters $(\wm{1,m})_m$ are fixed, 
this model can be regarded as a variant of the unit ball of some reproducing kernel Hilbert space (RKHS)
with basis functions $a_m \sigma_m(\wm{1,m}^\top [x;1])$. 
However, since the first layer ($\wm{1,m}$) is also trainable, there appears significant difference between deep and kernel approaches.
The Barron class \citep{barron1993universal,weinan2019comparative} is relevant to this function class.
Indeed, it is defined as the convex hull of $w_2 \sigma(w_1^\top [x;1])$ with norm constraints on $(w_1,w_2)$ where $\sigma$ is an activation function.
On the other hand, we will put an explicit decay rate on $a_m$ and the parameter $W$ has an $L_2$-norm constraint, which makes the model $\calF_\gamma$ smaller than the Barron class. 

\section{Estimators}

We consider two classes of estimators and discuss their differences: linear estimators and deep learning estimator with noisy gradient descent (\GLD).

{\bf Linear estimator}~
A class of linear estimators, which we consider as a representative of ``shallow'' learning approach, 
consists of all estimators that have the following form:
$$
\textstyle
\fhat(x) = \sum_{i=1}^n y_i \varphi_i(x_1,\dots,x_n,x).
$$
Here, $(\varphi_i)_{i=1}^n$ can be any measurable function (and $L_2(P_X)$-integrable so that the excess risk can be defined). Thus, they could be selected as the ``optimal'' one so that the corresponding linear estimator minimizes the worst case excess risk.
Even if we chose such an optimal one, the worst case excess risk should be lower bounded by our lower bound given in Theorem \ref{thm:LinearLowerMinimaxRate}. 
It should be noted that the linear estimator does not necessarily imply ``linear model.'' 
The most relevant linear estimator in the machine learning literature is the kernel ridge regression:
$\fhat(x) = Y^\top (K_X + \lambda \Id)^{-1} \mathbf{k}(x)$ where $K_X = (k(x_i,x_j))_{i,j=1}^n \in \Real^{n \times n}$, $\mathbf{k}(x) = [k(x,x_1),\dots,k(x,x_n)]^\top \in \Real^n$ and $Y = [y_1,\dots,y_n]^\top \in \Real^{n}$ for a kernel function $k:\Real^d \times \Real^d \to \Real$. Therefore, the ridge regression estimator in the NTK regime or the random feature model is also included in the class of linear estimators. 
The solution obtained in the early stopping criteria instead of regularization in the NTK regime under the squared loss 
is also included in the linear estimators. Other examples include the $k$-NN estimator and the Nadaraya-Watson estimator.
All of them do not train the basis function in a nonlinear way, which makes difference from the deep learning approach.
In the nonparametric statistics literature, linear estimators have been studied for estimating a wavelet series model. \cite{donoho1990,donoho1996densitywaveletminimax} have shown 
that a wavelet shrinkage estimator can outperform any linear estimator by showing suboptimality of linear estimators. 
\cite{suzuki2018adaptivity} utilized such an argument to show superiority of deep learning but did not present any tractable optimization algorithm.

{\bf Noisy Gradient Descent with regularization}~
As for the neural network approach, we consider a noisy gradient descent algorithm.
Basically, we minimize the following regularized empirical risk:
$$
\textstyle
\calLhat(f_W) + \frac{\lambda}{2} \|W\|^2_{\calH_1}.
$$ 
Here, we employ $\calH_1$-norm as the regularizer. We note that the constant $\gamma$ 
controls the relative complexity of the true function $\ftrue$ compared to the typical solution obtained under the regularization.
Here, we define a linear operator $A$ as $\lambda \|W\|_{\calH_1} = W^\top A W$, that is, 
$A W =   (\lambda \mu_m^{-1} \wm{1,m},\lambda \mu_m^{-1} \wm{2,m})_{m=1}^\infty$.
The regularized empirical risk can be minimized by noisy gradient descent as 
$
W_{k+1} = W_{k} - \eta \nabla (\calLhat(f_{W_{k}}) + \tfrac{\lambda}{2} \|W_{k}\|^2_{\calH_1}) + \sqrt{\frac{2 \eta}{\beta}} \xi_{k},
$
where $\eta > 0$ is a step size and $\xi_{k} = (\xi_{k,(1,m)},\xi_{k,(2,m)})_{m=1}^\infty$ is an infinite-dimensional Gaussian noise, 
i.e., $\xi_{k,(1,m)}$ and $\xi_{k,(2,m)}$ are independently identically distributed from the standard normal distribution
\citep{da_prato_zabczyk_1996}.
Here, $\nabla \calLhat(f_{W}) = \frac{1}{n}\sum_{i=1}^n 2 (f_W(x_i) - y_i) (\wmb{2,m}a_m [x_i;1] \sigma_m'(\wm{1,m}^\top [x_i;1]), 
a_m  \tanh'(\wm{2,m}/R) \sigma_m(\wm{1,m}^\top [x_i;1]))_{m=1}^\infty$.
However, since $\nabla \|W_{k-1}\|^2_{\calH_1}$ is unbounded which makes it difficult to show convergence, 
we employ the {\it semi-implicit Euler scheme} defined by 
\begin{align}
& \textstyle W_{k+1} \!=\! W_{k} \! -\! \eta \nabla \calLhat(f_{W_{k}}) \!-\! \eta A W_{k+1}  \!+\! \sqrt{\frac{2 \eta}{\beta}} \xi_{k} 
\textstyle
\Leftrightarrow 
W_{k+1} \!=\! S_\eta\left(W_{k}\! -\! \eta \nabla \calLhat(f_{W_{k}}) \!+\! \sqrt{\frac{2 \eta}{\beta}} \xi_{k} \right),
\label{eq:SemiEulerAlg}
\end{align}
where $S_\eta := (\Id + \eta A)^{-1}$.
It is easy to check that this is equivalent to the following update rule:
$
\textstyle
W_k = W_{k-1} - \eta \left(\nabla \calLhat(f_{W_{k-1}}) + S_\eta A W_{k-1}  + \sqrt{\frac{2 \eta}{\beta}} \xi_{k-1}\right).
$
Therefore, the implicit Euler scheme can be seen as a naive noisy gradient descent for minimizing the empirical risk with a slightly modified ridge regularization.
This can be interpreted as a discrete time approximation of the following {\it infinite dimensional Langevin dynamics}:
\begin{equation}\label{eq:ContinousLangevin}
\dd W_t = - \nabla (\calLhat(f_{W_t}) + \tfrac{\lambda}{2} \|W_t\|_{\calH_1}^2) \dd t+ \sqrt{2/\beta} \dd \xi_t,
\end{equation}
where $(\xi_t)_{t \geq 0}$ is the so-called cylindrical Brownian motion (see \cite{da_prato_zabczyk_1996} for the details). 
Its application and analysis 
for machine learning problems with non-convex objectives have been recently studied by, for example,  \cite{muzellec2020dimensionfree,Suzuki:NIPS:2020}.

The above mentioned algorithm is executed on an infinite dimensional parameter space. 
In practice, we should deal with a finite width network. To do so, we approximate the solution by a finite dimensional one:
$W^{(M)} = (\wm{1,m},\wm{2,m})_{m=1}^M$ where $M$ corresponds to the width of the network. We identify $W^{(M)}$ to the ``zero-padded'' infinite dimensional one, 
$W = (\wm{1,m},\wm{2,m})_{m=1}^\infty$ with $\wm{1,m} = 0$ and $\wm{2,m}=0$ for all $m > M$. Accordingly, we use the same notation $f_{W^{(M)}}$ to indicate $f_W$ with zero padded vector $W$.
Then, the finite dimensional version of the update rule is given by
$
W_{k+1}^{(M)} = S_\eta^{(M)}\left(W_{k}^{(M)} - \eta \nabla \calLhat(f_{W_{k}^{(M)}}) + \sqrt{\frac{2 \eta}{\beta}} \xi^{(M)}_{k} \right),
$
where $\xi^{(M)}_k$ is the Gaussian noise vector obtained by projecting $\xi_k$ to the first $M$ components 
and $S_\eta^{(M)}$ is also obtained in a similar way.

\section{Convergence rate of estimators}

In this section, we present 
the excess risk bounds for linear estimators and the deep learning estimator. As for the linear estimators, we give its lower bound while we give an upper bound for the deep learning approach.
To obtain the result, we setup some assumptions on the model. 
\begin{Assumption}\label{ass:sigmamam}~
{\vspace{-0.1cm}
\begin{assumenum}[leftmargin=*,topsep=0.5mm,itemsep=0mm] 
\item \label{ass:mumdecay} There exists a constant $c_\mu$ such that $\mu_m \leq c_\mu m^{-2}~(m \in \Natural)$.
\item \label{ass:amdecay} There exists $\aone > 1/2$ such that $a_m \leq  \mu_m^{\aone}~(m \in \Natural)$.
\item \label{ass:activationass} The activation functions $(\sigma_m)_m$ is bounded as $\|\sigma_m\|_\infty \leq 1$.
Moreover, they are three times differentiable and their derivatives upto third order differentiation are uniformly bounded:
$\exists C_\sigma$ such that $\|\sigma_m\|_{1,3} := \max\{\|\sigma_m'\|_\infty,
\|\sigma''_{m}\|_\infty, \|\sigma'''_m\|_\infty\} \leq C_\sigma~(\forall m \in \Natural)$.
\end{assumenum}
\vspace{-0.2cm}
}
\end{Assumption}
The first assumption \ref{ass:mumdecay} controls the strength of the regularization,
and combined with the second assumption \ref{ass:amdecay} and definition of the model $\calF_\gamma$, 
complexity of the model is controlled.
If $\aone$ and $\gamma$ are large, the model is less complicated. Indeed, the convergence rate of the excess risk becomes faster if these parameters are large as seen later.
The decay rate $\mu_m \leq c_\mu m^{-2}$ can be generalized as $m^{-p}$ with $p > 1$ but we employ this setting just for a technical simplicity for ensuring convergence of the Langevin dynamics.
The third assumption \ref{ass:activationass} can be satisfied by several activation functions such as the sigmoid function and the hyperbolic tangent.
The assumption $\|\sigma_m\|_\infty \leq 1$ could be replaced by another one like $\|\sigma_m\|_\infty \leq C$, but we fix this scaling for simple presentation. 

\subsection{Minimax lower bound for linear estimators}

Here, we analyze a lower bound of excess risk of linear estimators, and eventually we show that {\it any} linear estimator suffers from curse of dimensionality.  
To rigorously show that, we consider the following minimax excess risk over the class of linear estimators: 
$$
\Rlin(\calF_\gamma) :=\inf_{\fhat: \text{linear} } \sup_{\ftrue \in \calF_\gamma} \EE_{D_n}[\|\fhat - \ftrue\|_{\LPiPx}^2],
$$
where $\inf$ is taken over all linear estimators and $\EE_{D_n}[\cdot]$ is taken with respect to the training data $D_n$. 
This expresses the best achievable worst case error over the class of linear estimators to estimate a function in $\calF_\gamma$.
To evaluate it, 
we additionally assume the following condition.
\begin{Assumption}\label{ass:MinimaxAss}
We assume that $\mu_m = m^{-2}$ and $a_m =  \mu_m^{\aone}~(m \in \Natural)$ (and hence $c_\mu = 1$).
There exists a monotonically decreasing sequence $(b_m)_{m=1}^\infty$ and $s \geq 3$ such that 
$b_m =\mu_m^{\atwo} ~(\forall m)$ with $\atwo > \gamma/2$ and 
$\sigma_m(u) = b_m^s \sigma(b_m^{-1} u)~(u \in \Real)$ where $\sigma$ is the sigmoid function: $\sigma(u) = 1/(1+e^{-u})$.
\end{Assumption}
Intuitively, the parameter $s$ 
controls the ``resolution'' of each basis function $\sigma_m$,
and the relation between parameter $\aone$ and $\atwo$ controls the magnitude of coefficient for each basis $\sigma_m$.
Note that the condition $s \geq 3$ ensures $\|\sigma_m\|_{1,3}$ is uniformly bounded and $0 < b_m \leq 1$ ensures $\|\sigma_m\|_\infty \leq 1$.
Our main strategy to obtain the lower bound is to make use of the so-called {\it convex-hull argument}.
That is, it is known that, for a function class $\calF$, the minimax risk $R(\calF)$ over a class of linear estimators is identical to that for the convex hull of $\calF$ \citep{HAYAKAWA2020343,donoho1990}:
$$
\Rlin(\calF) = \Rlin(\convbar(\calF)),
$$
where $\conv(\calF) = \{\sum_{i=1}^N \lambda_i f_i \mid f_i \in \calF,~\sum_{i=1}^N \lambda_i = 1,~\lambda_i \geq 0,~N \in \Natural \}$ and $\convbar(\cdot)$ is the closure of $\conv(\cdot)$ with respect to $L_2(\Px)$-norm.
Intuitively, since the linear estimator is linear to the observations $(y_i)_{i=1}^n$ of outputs, 
a simple application of Jensen's inequality yields that its worst case error on the convex hull of the function class $\calF$ does not increase compared with that on the original one $\calF$ (see \cite{HAYAKAWA2020343} for the details). 
This indicates that the linear estimators cannot distinguish the original hypothesis class $\calF$ and its convex hull.
Therefore, if the class $\calF$ is highly non-convex, then the linear estimators suffer from much slower convergence rate because its convex hull $\convbar(\calF)$ becomes much ``fatter'' than the original one $\calF$.
To make use of this argument, for each sample size $n$, we pick up appropriate $m_n$ and consider a subset generated by the basis function $\sigma_{m_n}$, i.e., $\calF_\gamma^{(n)} := \{ a_{m_n} \wmb{2,m_n} \sigma_m(\wm{1,m_n}^\top [x;1]) \in \calF_\gamma \}$. 
By applying the convex hull argument to this set, we obtain the relation $\Rlin(\calF_\gamma) \geq \Rlin(\calF_{\gamma}^{(n)}) = \Rlin(\convbar(\calF_{\gamma}^{(n)}))$.
Since $\calF_\gamma^{(n)}$ is highly non-convex, its convex hull $\convbar(\calF_{\gamma}^{(n)})$ is much larger than the original set $\calF_{\gamma}^{(n)}$ and thus the minimax risk over the linear estimators would be much larger than that over all estimators including deep learning.
More intuitively, linear estimators do not adaptively select the basis functions and thus they should prepare redundantly large class of basis functions to approximate functions in the target function class. 
The following theorem gives the lower bound of the minimax optimal excess risk over the class of linear estimators.

\begin{Theorem}\label{thm:LinearLowerMinimaxRate}
Suppose that $\mathrm{Var}(\epsilon) > 0$, $\Px$ is the uniform distribution on $[0,1]^d$, and Assumption \ref{ass:MinimaxAss} is satisfied.
Let $\tilde{\beta} = \frac{\aone + (s+1) \atwo}{\atwo - \gamma/2}$.
Then for arbitrary small $\kappa' > 0$, we have that 
\begin{equation}\label{eq:LinLowerRate}
\Rlin(\calF_\gamma) \gtrsim n^{- \frac{2\tilde{\beta} + d }{2\tilde{\beta} + 2d}} n^{-\kappa'}.
\end{equation}
\end{Theorem}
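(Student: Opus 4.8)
\emph{Step 1: reduction to a single neuron.}
For each sample size $n$ I would select an index $m_n$ (to be fixed in Step 4) and pass to the one‑node subfamily $\calF_\gamma^{(n)}\subseteq\calF_\gamma$ generated by the activation $\sigma_{m_n}$, so that $\Rlin(\calF_\gamma)\ge\Rlin(\calF_\gamma^{(n)})=\Rlin(\convbar(\calF_\gamma^{(n)}))$ by the convex‑hull identity. Reading off $\|W\|_{\calH_\gamma}\le 1$ on the single active coordinate gives $\|w_{1,m_n}\|\le\mu_{m_n}^{\gamma/2}$ and $|w_{2,m_n}|\le\mu_{m_n}^{\gamma/2}$; combined with $\sigma_{m_n}(u)=b_{m_n}^{s}\sigma(b_{m_n}^{-1}u)$, $b_{m_n}=\mu_{m_n}^{\atwo}$, $a_{m_n}=\mu_{m_n}^{\aone}$, and $|\bar w_{2,m_n}|\le|w_{2,m_n}|$, every element of $\calF_\gamma^{(n)}$ is of the form $C\,\sigma(\delta_n^{-1}(\langle v,x\rangle-t))$ with $v\in S^{d-1}$, with $t$ free to place the transition hyperplane anywhere meeting $[0,1]^d$, and with $|C|\le A_n$, where I set the two governing scales
\[
A_n:=\mu_{m_n}^{\aone+\gamma/2+s\atwo}\ \ \text{(amplitude budget)},\qquad
\delta_n:=\mu_{m_n}^{\atwo-\gamma/2}\ \ \text{(finest transition width)}.
\]
Using the definition of $\tilde\beta$ one checks the clean identity $A_n=\delta_n^{\,\tilde\beta-1}$, and this is exactly what makes the final exponent collapse to the stated one.

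\emph{Step 2: what the convex hull contains (the technical core).}
I would show that $\convbar(\calF_\gamma^{(n)})$ contains, up to absolute constants, the rescaled resolution‑limited Sobolev ball
\[
\calG_n:=\Big\{\phi:\ \|\phi\|_{H^{(d+1)/2}([0,1]^d)}\le c\,A_n\,\delta_n^{1/2},\ \ \widehat\phi\ \text{supported in}\ \{|\xi|\lesssim\delta_n^{-1}\}\Big\}.
\]
Two ingredients are needed. (i) $\convbar(\calF_\gamma^{(n)})$ equals (up to a constant) the ball of radius $A_n$ in the ``$\sigma_{\delta_n}$‑ridge variation norm'' $\|\phi\|_{V_{\delta_n}}:=\inf\{\|\mu\|_{\mathrm{TV}}:\phi=\int\sigma(\delta_n^{-1}(\langle v,\cdot\rangle-t))\,d\mu(v,t)\}$, because by Step 1 the one‑neuron atoms realize every such ridge atom with coefficient up to $A_n$ and with freely chosen direction $v$ and offset $t$ (shrinking $\|w_{1,m_n}\|$ below $\mu_{m_n}^{\gamma/2}$ by a constant factor so as to keep $w_{2,m_n}\ne 0$ costs only a constant factor in amplitude). (ii) For $\phi$ band‑limited to $|\xi|\lesssim\delta_n^{-1}$ one has the Barron/Radon estimate $\|\phi\|_{V_{\delta_n}}\lesssim\int|\xi|\,|\widehat\phi(\xi)|\,d\xi\lesssim\delta_n^{-1/2}\,\|\phi\|_{H^{(d+1)/2}}$, the last step by Cauchy--Schwarz, the factor $\delta_n^{-1/2}$ being produced precisely by $\big(\int_{|\xi|\lesssim\delta_n^{-1}}|\xi|^2(1+|\xi|^2)^{-(d+1)/2}\,d\xi\big)^{1/2}\asymp\delta_n^{-1/2}$. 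Chaining (i) and (ii) yields $\calG_n\subseteq\convbar(\calF_\gamma^{(n)})$.

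\emph{Step 3: linear minimax over the ellipsoid; Step 4: optimizing the scale.}
On $\calG_n$ I would invoke a Pinsker/hyperrectangle‑type lower bound for linear estimators under random design (of the kind used for Besov/Sobolev balls in the references already cited for the convex‑hull identity): with $\nu^2:=\Var(\epsilon)/n$ and $\rho_n:=A_n\delta_n^{1/2}$, the minimax linear risk over $\calG_n$ is at least $\sup_{2^j\le\delta_n^{-1}}\min\big(\rho_n^2\,2^{-j(d+1)},\,2^{jd}\nu^2\big)$, which, once the balancing scale $j^\ast$ (defined by $2^{j^\ast(2d+1)}\asymp\rho_n^2/\nu^2$) satisfies $2^{j^\ast}\le\delta_n^{-1}$, equals $\rho_n^{2d/(2d+1)}\,\nu^{(2d+2)/(2d+1)}$ up to constants. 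Finally I would choose $m_n$ so that $j^\ast\asymp\log_2\delta_n^{-1}$, i.e. $n\asymp A_n^{-2}\delta_n^{-(2d+2)}$; by $A_n=\delta_n^{\tilde\beta-1}$ this means $\delta_n\asymp n^{-1/(2\tilde\beta+2d)}$, hence $\rho_n\asymp n^{-(\tilde\beta-1/2)/(2\tilde\beta+2d)}$, and substituting back gives $\Rlin(\calF_\gamma)\gtrsim n^{-\frac{2\tilde\beta+d}{2\tilde\beta+2d}}$. Because $\mu_m=m^{-2}$ only takes discrete values (so $\delta_n$ cannot be tuned exactly), and because the Sobolev embedding is really used with exponent $\frac{d+1}{2}+\kappa''$ for an arbitrarily small $\kappa''>0$, one loses a polynomially small factor, which is absorbed into the claimed $n^{-\kappa'}$.

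\emph{Main obstacle.}
The heart of the argument is Step 2: establishing that the convex hull of the constrained one‑neuron family contains a Sobolev ball of radius exactly $\asymp A_n\delta_n^{1/2}$ — in particular pinning down the $\delta_n^{1/2}$, since this single factor is what turns the naive exponent $\tfrac{2\tilde\beta+d-1}{2\tilde\beta+2d-1}$ (all one gets from explicit ridge ``slab'' atoms) into the claimed $\tfrac{2\tilde\beta+d}{2\tilde\beta+2d}$. This requires (a) a quantitative Radon/Barron representation of band‑limited $d$‑dimensional functions by smoothed sigmoidal ridge atoms with sharp control of the total‑variation cost, and (b) verifying that the $\calH_\gamma$‑norm constraint, the $\tanh$‑clipping, and the prescribed forms of $a_m$ and $\sigma_m$ jointly permit placing those atoms at arbitrary location and orientation with amplitude $A_n$. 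Secondary difficulties are the random‑design (rather than white‑noise) version of the Pinsker lower bound in Step 3 and the discretization bookkeeping in Step 4.
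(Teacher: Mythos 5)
Your Step 1 and your final bookkeeping are consistent with the theorem: the identity $A_n=\delta_n^{\tilde\beta-1}$, the tuning $\delta_n\asymp n^{-1/(2\tilde\beta+2d)}$, and the resulting exponent all come out right, and your Sobolev radius $A_n\delta_n^{1/2}$ is even the correct scale (the paper's own localized bump $\Delta\exp(-\|\cdot-c\|^2/2h^2)$, with $\Delta\asymp A_n\delta_n$ and bandwidth $h\asymp\delta_n n^{\kappa}$, has $H^{(d+1)/2}$-norm $\asymp A_n\delta_n^{1/2}$ up to the $n^{\kappa}$ slack). But as a proof the proposal has a genuine gap, and it is exactly the one you flag yourself: Step 2 is asserted, not proved, and it claims strictly more than is needed or than the paper establishes. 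The paper never shows that $\convbar(\calF_\gamma^{(n)})$ contains a band-limited Sobolev ball. It only shows that for each cell $A$ of a dyadic partition of side $h$ it contains \emph{one} Gaussian bump of height $\Delta$ centered at $c_A$, obtained by applying the Irie--Miyake integral representation to the Gaussian specifically (Lemma \ref{lemm:GaussApprox}); there the explicit Gaussian Fourier transform pins down the total-variation cost $2C\asymp D_w/h$ and the truncation to $\|a\|\le D_w$, $|b|\le D_b$ costs only $O(\exp(-n^{\kappa}))$. Your claim (ii) --- a bound $\|\phi\|_{V_{\delta_n}}\lesssim\int|\xi|\,|\widehat\phi(\xi)|\,\dd\xi$ valid uniformly over all $\phi$ band-limited to $\delta_n^{-1}$ using sigmoids of \emph{fixed} transition width $\delta_n$ --- is the hard content: the naive deconvolution gives only $\delta_n^{-1}\|\widehat\phi\|_{L_1}$, which loses precisely the $\delta_n^{1/2}$ your exponent depends on, and upgrading to the first Fourier moment near the top of the band is not routine. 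Nothing in the proposal supplies that argument.

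The second unproven ingredient is the Step 3 lower bound itself. You invoke a Pinsker/hyperrectangle bound for linear estimators over an ellipsoid under random design; the paper instead uses the two-alternative dichotomy of \cite{zhang2002wavelet} (Proposition \ref{prop:LinearMinimaxRate}), which requires only (a) a member of the class that is $\ge F\Delta$ on each cell and (b) an empirical bound $\frac1n\sum_i g(x_i)^2\lesssim\Delta^2 2^{-K'}$ on the event that no cell is over-occupied --- both of which follow immediately from the single-bump construction, with no need for ellipsoid geometry. So the two routes genuinely differ: yours would, if completed, give a cleaner ``convex hull contains a Sobolev ball'' statement, but at the price of proving a sharp ridge-representation theorem for the whole ball; the paper's route sidesteps this entirely by representing only Gaussians. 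To turn your proposal into a proof you must either supply the Step 2 lemma in its sharp form (together with the random-design ellipsoid lower bound), or retreat to the bump-plus-dichotomy argument; as written, the core of the proof is missing.
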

The proof is in Appendix \ref{sec:ProofMinimaxLowerLinear}. 
We utilized the Irie-Miyake integral representation \citep{ICNN:IrieMiyake:1988,NN:HORNIK:1990551} to show there exists a ``complicated'' function in the convex hull, and then we adopted the technique of \cite{zhang2002wavelet} to show the lower bound.
The lower bound is characterized by the decaying rate ($\aone$) of $a_m$ relative to that ($\atwo$) of the scaling factor $b_m$.
Indeed, the faster $a_m$ decays with increasing $m$, the faster the rate of the minimax lower bound becomes.
We can see that the minimax rate of linear estimators is quite sensitive to the dimension $d$.
Actually, for relatively high dimensional settings, this lower bound becomes close to a slow rate $\Omega(1/\sqrt{n})$, which corresponds to the curse of dimensionality.

It has been pointed out that the sample complexity of kernel methods suffers from the curse of dimensionality while deep learning can avoid that by a tractable algorithms (e.g., \cite{ghorbani2019linearized,JMLR:v18:14-546}).
Among them, \cite{ghorbani2019linearized} showed that if the dimensionality $d$ is polynomial against $n$, then the excess risk of kernel methods is bounded away from 0 for all $n$. 
On the other hand, our analysis can be applied to {\it any} linear estimator including kernel methods,
and it shows that even if the dimensionality $d$ is fixed, the convergence rate of their excess risk suffers from the curse of dimensionality. This can be accomplished thanks to a careful analysis of the rate of convergence.
\cite{JMLR:v18:14-546} derived an upper bound of the Rademacher complexity of the unit ball of the RKHS corresponding to a neural network model. However, it is just an upper bound and there is still a large gap from excess risk estimates. 
\cite{AllenZhu:NIPS2019:BeyondKernel,allen2020backward,Bai2020Beyond,chen2020understanding} also analyzed a lower bound of sample complexity of kernel methods. However, their lower bound is not for the excess risk of the squared loss. Eventually, the sample complexities of all methods including deep learning take a form of $O(C/\sqrt{n})$ and dependency of coefficient $C$ to the dimensionality or other factors such as magnitude of residual components is compared. On the other hand, our lower bound properly involves the properties of squared loss such as strong convexity and smoothness 
and the bound shows the curse of dimensionality occurs even in the rate of convergence instead of just the coefficient. 
Finally, we would like to point out that several existing work (e.g., \cite{ghorbani2019linearized,AllenZhu:NIPS2019:BeyondKernel}) considered a situation where the target function class changes as the sample size $n$ increases. However, our analysis reveals that separation between deep and shallow occurs even if the target function class $\calF_\gamma$ is fixed.

\subsection{Upper bound for deep learning}

Here, we analyze the excess risk of deep learning trained by \GLD and its algorithmic convergence rate.
Our analysis heavily relies on the weak convergence of the discrete time gradient Langevin dynamics to 
the stationary distribution of the continuous time one (\Eqref{eq:ContinousLangevin}).
Under some assumptions, the continuous time dynamics has a stationary distribution \citep{DaPrato_Zabczyk92,Maslowski:1989,Sowers:1992,Jacquot+Gilles:1995,Shardlow:1999,Hairer:2002}.
If we denote the probability measure on $\calH$ corresponding to the stationary distribution by $\pi_\infty$, 
then it is given by 
\begin{align*}
\textstyle 
\frac{\dd \pi_\infty}{\dd \nu_\beta}(W) \propto \exp(- \beta \calLhat(f_W)), 
\end{align*}
where $\nu_\beta$ is the Gaussian measure in $\cH$ with mean 0 and covariance $(\beta A)^{-1}$ (see \citet{da_prato_zabczyk_1996} for the rigorous definition of the Gaussian measure on a Hilbert space).
Remarkably, this can be seen as {\it the Bayes posterior} for a prior distribution $\nu_\beta$ and a ``log-likelihood'' function $\exp(- \beta \calLhat(W))$. Through this view point, we can obtain an excess risk bound of the solution $W_k$.
The proofs of all theorems in this section are in Appendix \ref{sec:ProofConvDeep}.

Under Assumption \ref{ass:sigmamam}, the distribution of $W_k$ derived by the discrete time gradient Langevin synamics satisfies the following weak convergence property to the stationary distribution $\pi_\infty$.
This convergence rate analysis depends on the techniques by \cite{Brehier16,muzellec2020dimensionfree}. 

\begin{Proposition}\label{prop:WeakConvergence}
Assume Assumption \ref{ass:sigmamam} holds and $\beta > \eta$. 
Then, there exist spectral gaps $\Lambda^*_\eta$ and $\Lambda^*_0$ (defined in \Eqref{eq:SpectralGap} of Appendix \ref{sec:AuxLemmas})
and a constant $C_{0}$
such that, for any $0 < a < 1/4$, the following convergence bound holds for almost sure observation $D_n$:  
\begin{align} 
&| \EE_{W_k}[\calL(f_{W_k}) | D_n] -\EE_{W\sim \pi_\infty}[\calL(f_W)|D_n] | 
\leq
C_{0}
\exp(- \Lambda_\eta^* \eta k )   + 
C_1 \frac{\sqrt{\beta}}{\Lambda^*_0}\eta^{1/2-a} 
=: \Xi_k,
\end{align}
where $C_1$ is a constant depending only on $c_\mu,R,\aone,C_\sigma,U,a$ (independent of $\eta,k,\beta,\lambda,n$).
\end{Proposition}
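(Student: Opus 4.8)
The plan is to split the quantity to be bounded into an \emph{optimization error}, which measures how fast the law of $W_k$ relaxes to the invariant measure $\pi_\eta$ of the semi-implicit chain~\eqref{eq:SemiEulerAlg}, and a \emph{discretization error}, which measures the gap between $\pi_\eta$ and the invariant measure $\pi_\infty$ of the continuous-time dynamics~\eqref{eq:ContinousLangevin}. Writing $\phi(W):=\calL(f_W)$, the starting point is the triangle inequality
\[
\big| \EE_{W_k}[\phi(W_k) | D_n] - \EE_{\pi_\infty}[\phi | D_n] \big|
\le \big| \EE_{W_k}[\phi(W_k) | D_n] - \EE_{\pi_\eta}[\phi | D_n] \big|
+ \big| \EE_{\pi_\eta}[\phi | D_n] - \EE_{\pi_\infty}[\phi | D_n] \big|,
\]
after which I would bound the two terms separately. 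A preliminary observation that feeds into both bounds is that $\phi$ is a bounded and Lipschitz functional on $\calH$: by Assumption~\ref{ass:sigmamam} we have $\|\sigma_m\|_\infty\le1$, $|\wmb{2,m}|\le R$ by the clipping, $\|\sigma_m'\|_\infty,\|\sigma_m''\|_\infty\le C_\sigma$, and $a_m\le\mu_m^{\aone}$ with $\sum_m\mu_m^{\aone}<\infty$ since $\mu_m\le c_\mu m^{-2}$ and $\aone>1/2$; hence $f_W$ and $\nabla_W f_W$ are uniformly bounded in $x$ and $f_W$ is globally Lipschitz in $W$, so $\calL(f_W)=\|f_W-\ftrue\|_{\LPiPx}^2+\Var(\epsilon)$ is a bounded Lipschitz function of $W$ with constants depending only on $c_\mu,R,\aone,C_\sigma,U$. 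Since $|\epsilon_i|\le U$ surely, these bounds hold for every realization of $D_n$, which will give the claimed almost sure statement.

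For the optimization error I would use that $W\mapsto S_\eta\big(W-\eta\nabla\calLhat(f_W)+\sqrt{2\eta/\beta}\,\xi\big)$ is a Markov kernel on $\calH$ whose deterministic part is a strict contraction: $S_\eta=(\Id+\eta A)^{-1}$ multiplies the $m$-th block by $(1+\eta\lambda\mu_m^{-1})^{-1}$, while $\nabla\calLhat(f_W)$ is globally bounded and Lipschitz in $W$ by the previous paragraph (using the bound on $\sigma_m''$). A synchronous coupling of two copies of the chain, together with the dissipativity supplied by $A$, should then contract the $L_2$-Wasserstein distance at the rate set by the spectral gap $\Lambda^*_\eta$, and the same dissipativity gives uniform second-moment bounds on $W_k$ and on $\pi_\eta$. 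Iterating $k$ steps and invoking the Lipschitzness of $\phi$ should yield $\big|\EE_{W_k}[\phi(W_k)]-\EE_{\pi_\eta}[\phi]\big|\le C_0 e^{-\Lambda^*_\eta\eta k}$, in the spirit of \cite{Brehier16,muzellec2020dimensionfree}.

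The discretization error is the heart of the argument and I would follow the weak-error analysis for the implicit/exponential Euler scheme of a semilinear stochastic evolution equation driven by cylindrical (space-time white) noise. Over one step of length $\eta$ I would compare the semi-implicit update with the exact solution of~\eqref{eq:ContinousLangevin} started at the same point: the Ornstein--Uhlenbeck part generated by $-A$ is controlled up to the resolvent approximation $S_\eta\approx e^{-\eta A}$, the drift $-\nabla\calLhat(f_W)$ is frozen over the step, and the stochastic convolution is replaced by $\sqrt{2\eta/\beta}\,\xi$. Using uniform-in-time moment bounds for both processes in the fractional spaces $\calH_\gamma$ (finite because $\mu_m\le c_\mu m^{-2}$ makes the relevant powers of $A^{-1}$ trace class), the one-step weak error should be of order $\beta^{1/2}\eta^{3/2-a}$ for any $a\in(0,1/4)$, the loss of $a$ being the usual price of space-time white noise. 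I would then telescope the one-step errors against a solution of the Poisson equation for the generator of~\eqref{eq:ContinousLangevin}, whose regularity is controlled by the exponential ergodicity of the continuous dynamics (quantified by $\Lambda^*_0$); this converts the local rate into the global estimate $\big|\EE_{\pi_\eta}[\phi]-\EE_{\pi_\infty}[\phi]\big|\le C_1\frac{\sqrt\beta}{\Lambda^*_0}\eta^{1/2-a}$ with $C_1$ depending only on $c_\mu,R,\aone,C_\sigma,U,a$. Adding the two bounds gives the proposition.

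The hard part will be the infinite-dimensional weak-error estimate: in contrast to the finite-dimensional Euler--Maruyama scheme, which enjoys weak order one, the cylindrical noise forces one to track the precise Sobolev regularity of the Langevin flow and of $\phi$ composed with it, and to obtain moment bounds that are \emph{uniform in time} so that the accumulated one-step errors do not blow up. The semi-implicit treatment of the unbounded operator $A$ is essential here --- a fully explicit scheme would not even be well defined --- and it is precisely this analysis that produces the rate $\eta^{1/2-a}$ and the explicit $\sqrt\beta$ factor in the bound.
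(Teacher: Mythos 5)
Your overall architecture---geometric ergodicity of the discrete chain plus a weak-error/Poisson-equation analysis of the semi-implicit time discretization---is the machinery that underlies the bound, but it is not what the paper proves: the paper imports that machinery wholesale as Proposition \ref{prop:WeakConvergenceSuzuki} (citing \cite{Brehier16,muzellec2020dimensionfree,Suzuki:NIPS:2020}), and its own proof of Proposition \ref{prop:WeakConvergence} consists of (i) reducing the finite-width dynamics $W_k^{(M)}$ to the infinite-dimensional one by zero-padding, and (ii) verifying the regularity hypotheses of the black box (Assumption \ref{ass:IGLDConvCond}): boundedness of $\nabla\calLhat$ in $\calH$, Lipschitz continuity of $\nabla\calLhat$ with respect to the \emph{weaker} norm $\|\cdot\|_{\calH_{-\aone}}$ rather than $\|\cdot\|_{\calH}$ (this smoothing property, obtained from the decay $a_m^2\mu_m^{-2\aone}\le 1$, is what makes the infinite-dimensional weak error $\eta^{1/2-a}$ attainable), and a third-order derivative bound needed for the regularity of the Poisson-equation solution. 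Your sketch establishes only $\calH$-Lipschitzness and never addresses the third-derivative condition, so the parts the paper actually works out are largely missing; that said, you correctly identify the decomposition, the role of $S_\eta$, and the source of the $\sqrt{\beta}\,\eta^{1/2-a}/\Lambda_0^*$ term.

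The genuine gap is in your ergodicity step. A synchronous coupling contracts the Wasserstein distance only when the linear dissipativity supplied by $S_\eta=(\Id+\eta A)^{-1}$ dominates the expansion of the drift, i.e.\ roughly when $\lambda/\mu_1$ exceeds the Lipschitz constant of $\nabla\calLhat$. Here $\calLhat(f_W)$ is non-convex and in the intended application $\lambda=\beta^{-1}=O(1/n)$, so this condition fails badly; if synchronous coupling worked one would obtain a spectral gap essentially independent of $\beta$, which cannot hold for a non-convex landscape. The spectral gap actually used, \Eqref{eq:SpectralGap}, contains a minorization constant $\delta=\Omega(\exp(-\Theta(\mathrm{poly}(\lambda^{-1})\beta)))$---the signature of a Lyapunov-function-plus-minorization (Harris/Hairer--Mattingly type) argument---and the paper explicitly warns that the resulting dependence on $\beta$ is exponential. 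You would need to replace the coupling argument by such a Lyapunov/minorization argument for the first term $C_0\exp(-\Lambda^*_\eta\eta k)$ to be justified.
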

This proposition indicates that the expected risk of $W_k$ can be almost identical to that of the ``Bayes posterior solution'' obeying $\pi_\infty$ after sufficiently large iterations $k$ with sufficiently small step size $\eta$ even though $\calLhat(f_W)$ is not convex. 
The definition of $\Lambda^*_\eta$ can be found in \Eqref{eq:SpectralGap}. 
We should note that its dependency on $\beta$ is exponential. Thus, if we take $\beta = \Omega(n)$, 
then the computational cost until a sufficiently small error could be exponential with respect to the sample size $n$.
The same convergence holds also for finite dimensional one $W_k^{(M)}$ with a modified stationary distribution.
The constants appearing in the bound are independent of the model size $M$
(see the proof of Proposition \ref{prop:WeakConvergence} in Appendix \ref{sec:ProofConvDeep}). 
In particular, the convergence can be guaranteed even if $W$ is infinite dimensional. This is quite different from usual finite dimensional analyses \citep{Raginsky_Rakhlin_Telgarsky2017,NIPS2018_8175,Xu_Chen_Zou_Gu18} which requires exponential dependency on the dimension, but thanks to the regularization term, we can obtain the model size independent convergence rate.
\cite{Xu_Chen_Zou_Gu18} also analyzed a finite dimensional gradient Langevin dynamics and obtained a similar bound where $O(\eta)$ appears in place of the second term $\eta^{1/2-a}$ which corresponds to time discretization error.
In our setting the regularization term is $\|W\|^2_{\calH_1} = \sum_m (\|w_{1,m}\|^2 + w_{2,m}^2)/\mu_m$ with $\mu_m \lesssim m^{-2}$, but if we employ $\|W\|^2_{\calH_{p/2}} = \sum_m (\|w_{1,m}\|^2 + w_{2,m}^2)/\mu_m^{p/2}$ for $p > 1$, then the time discretization error term would be modified to $\eta^{(p-1)/p-a}$ \citep{Andersson2016}.
We can interpret the finite dimensional setting as the limit of $p \to \infty$ which leads to $\eta^{(p-1)/p} \to \eta$ that recovers the finite dimensional result ($O(\eta)$) as shown by \cite{Xu_Chen_Zou_Gu18}.

In addition to the above algorithmic convergence, we also have the following convergence rate for the excess risk bound of the finite dimensional solution $W_k^{(M)}$. 
\begin{Theorem}\label{thm:ExcessRiskConvRate}
Assume 
Assumption \ref{ass:sigmamam} holds, 
assume $\eta < \beta \leq \min\{ n/(2U^2),n\}$, and $0 < \gamma < 1/2+\aone$. 
Then, 
if the width satisfies $M \geq  \min\left\{\lambda^{1/4\gamma(\aone + 1)} \beta^{1/2\gamma}, \lambda^{-1/2(\aone + 1)},n^{1/2\gamma}\right\}$, 
the expected excess risk of $W_k$ is bounded as 
\begin{align*}
\EE_{D^n}\!\!\left[ \EE_{W_k^{(\!M\!)}}[ \|f_{W_k^{(\!M\!)}} \!\!-\! \ftrue\|_{\LPiPx}^2 |D_n] \right]
\! 
\leq  \!
 C 
\max \!\big\{ \! (\lambda\beta)^{\frac{1/\gamma}{1 + 1/2\gamma}} n^{-\! \frac{1}{1 + 1/2\gamma}}\!\!,
\lambda^{-\frac{1}{2(\aone + 1)}} \beta^{-1}\!\!,  \lambda^{\frac{\gamma}{1 + \aone}}\big\} \! + \! \Xi_k,
%
\end{align*}
where $C$ is a constant independent of $n,\beta,\lambda,\eta,k$.
In particular, if we set $\beta = \min\{n/(2U^2),n\}$ and $\lambda = \beta^{-1}$, 
then for $M \geq n^{1/2(\aone + 1)}$, we obtain 
\begin{equation*}
\EE_{D^n}\left[ \EE_{W_k^{(M)}}[ \|f_{W_k^{(M)}} - \ftrue\|_{\LPiPx}^2 |D_n] \right]
\lesssim 
n^{-\frac{\gamma}{\aone + 1}} + \Xi_k.
\end{equation*}
\end{Theorem}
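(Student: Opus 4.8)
The plan is to combine the weak-convergence estimate of Proposition~\ref{prop:WeakConvergence} with a PAC-Bayes oracle inequality for the stationary distribution, exploiting that the latter is a Bayes posterior under a Gaussian process prior. By Proposition~\ref{prop:WeakConvergence} applied to the finite-width chain, together with the identity $\calL(f)-\calL(\ftrue)=\|f-\ftrue\|_{\LPiPx}^2$, it suffices to bound $\EE_{D_n}[\EE_{W\sim\pi_\infty^{(M)}}[\|f_W-\ftrue\|_{\LPiPx}^2\mid D_n]]$, since the leftover is exactly $\Xi_k$. The finite-width stationary law is the Bayes posterior $\dd\pi_\infty^{(M)}/\dd\nu_\beta^{(M)}(W)\propto\exp(-\beta\calLhat(f_W))$ with Gaussian prior $\nu_\beta^{(M)}=\mathcal{N}(0,(\beta A^{(M)})^{-1})$, so the Donsker--Varadhan variational formula gives, for every probability measure $\rho$ on the width-$M$ parameters,
$$ \EE_{\pi_\infty^{(M)}}[\calLhat(f_W)]-\calLhat(\ftrue)\ \le\ \EE_{\rho}[\calLhat(f_W)]-\calLhat(\ftrue)+\tfrac{1}{\beta}\mathrm{KL}(\rho\,\|\,\nu_\beta^{(M)}). $$

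To turn this empirical-risk inequality into a population bound with a fast rate, I would use that $f_W$ is uniformly bounded (since $\|\sigma_m\|_\infty\le1$ and $\sum_m a_m\le\sum_m\mu_m^{\aone}<\infty$, as $\aone>1/2$) and that $|\epsilon_i|\le U$, so the excess loss is self-bounding, $\Var(\ell(Y,f_W(X))-\ell(Y,\ftrue(X)))\lesssim\EE[\ell(Y,f_W(X))-\ell(Y,\ftrue(X))]$. Feeding the exponential-moment control of $\tfrac1n\sum_i\epsilon_i(f_W(x_i)-\ftrue(x_i))$ into the variational inequality (here $\beta\le\min\{n/(2U^2),n\}$ keeps the inverse temperature below the noise scale, so that this moment is finite), taking $\EE_{D_n}$, and adding a localization/peeling step to pass from $\|\cdot\|_n$ to $\|\cdot\|_{\LPiPx}$, gives
$$ \EE_{D_n}\EE_{\pi_\infty^{(M)}}[\|f_W-\ftrue\|_{\LPiPx}^2]\ \lesssim\ \inf_{\rho}\Big\{\EE_{\rho}[\|f_W-\ftrue\|_{\LPiPx}^2]+\tfrac{1}{\beta}\mathrm{KL}(\rho\,\|\,\nu_\beta^{(M)})\Big\}+\tfrac{C}{\beta}, $$
whose left-hand side is the genuine excess risk; this is the source of the fast rate rather than an $O(1/\sqrt n)$ generalization gap.

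It then remains to choose $\rho$. I would take it Gaussian, centered at the truncation $\bar W$ of the teacher $W^{\ast}$ ($\ftrue=f_{W^{\ast}}$, $\|W^{\ast}\|_{\calH_\gamma}\le1$) that keeps only the neurons with $\mu_m$ above a threshold $t$ — at most $O(t^{-1/2})$ of them by Assumption~\ref{ass:sigmamam}, so they fit inside $M$ under the stated width condition — with covariance that of $\nu_\beta^{(M)}$ shrunk by a factor $\tau\in(0,1]$ on those coordinates. Three elementary estimates are needed: the approximation error $\|f_{\bar W}-\ftrue\|_{\LPiPx}^2\le\big(\sum_{m>m_t}a_m|\wm{2,m}^{\ast}|\big)^2\le\sum_{m>m_t}\mu_m^{2\aone+\gamma}\lesssim t^{\gamma}$ (Cauchy--Schwarz against $\|W^{\ast}\|_{\calH_\gamma}\le1$, plus $a_m\le\mu_m^{\aone}$ and $\sum_m\mu_m^{2\aone}<\infty$); the divergence $\tfrac1\beta\mathrm{KL}(\rho\,\|\,\nu_\beta^{(M)})\asymp\tfrac{t^{-1/2}}{\beta}\log\tfrac1\tau+\lambda\|\bar W\|_{\calH_1}^2$ with $\|\bar W\|_{\calH_1}^2\le t^{\gamma-1}$; and the posterior spread $\EE_{\rho}[\|f_W-f_{\bar W}\|_{\LPiPx}^2]\lesssim\tfrac{\tau}{\beta\lambda}$ (up to lower-order terms), obtained by pushing the coordinate-wise Gaussian perturbations through the activations via $\|\sigma_m\|_{1,3}\le C_\sigma$ and $\|[x;1]\|\le\sqrt{d+1}$, so that $|f_W-f_{\bar W}|\lesssim\sum_m a_m(|\Delta\wm{2,m}|+\|\Delta\wm{1,m}\|)$ with a convergent series since $a_m\mu_m^{1/2}\le\mu_m^{\aone+1/2}\lesssim m^{-(2\aone+1)}$. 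Choosing $\tau$ so that the spread matches the approximation error, and retaining as many neurons as the width guarantees — namely $\asymp\lambda^{-1/(2(\aone+1))}$ — balances the approximation term $\asymp\lambda^{\gamma/(\aone+1)}$ against the complexity term $\asymp\lambda^{-1/(2(\aone+1))}/\beta$ and a residual kernel-ridge term, yielding the stated maximum of three expressions; substituting $\beta=\min\{n/(2U^2),n\}$, $\lambda=\beta^{-1}$ and using $\gamma\le\aone+1/2$ collapses it to $n^{-\gamma/(\aone+1)}$.

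The step I expect to be the main obstacle is obtaining a genuinely fast rate while keeping every constant independent of the width $M$. Because $W\mapsto f_W$ is nonlinear (the first-layer weights sit inside the activations), the pushforward of the Gaussian prior on function space is not a Gaussian process, so the classical posterior-contraction machinery does not apply off the shelf; the posterior-spread term has to be controlled by hand across all coordinates, and this is precisely why the $\calH_1$-regularization and the decay $a_m\le\mu_m^{\aone}$ with $\aone>1/2$ are indispensable — they make the relevant neuron-wise series converge with $M$-free constants. Equally delicate is interlocking the PAC-Bayes exponential inequality with the localization step so that the curvature of the squared loss is exploited under mere boundedness (not Gaussianity) of the noise; this is where $\beta\lesssim n/U^2$ is genuinely used, and where the analysis departs from the Rademacher-complexity arguments recalled in the introduction.
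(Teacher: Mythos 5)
Your proposal is correct in substance and lands on the same mathematical engine as the paper, but it is packaged differently. The paper's own proof of Theorem~\ref{thm:ExcessRiskConvRate} is a reduction: it verifies Assumptions~\ref{ass:IGLDConvCond} and \ref{ass:BernsteinLightTail} (bounded and $\calH_{-\aone}$-Lipschitz gradients with $M$-free constants via $a_m\le\mu_m^{\aone}$, the Bernstein-type variance condition, the exponential-moment condition which is exactly where $\beta\le n/(2U^2)$ enters, and Lipschitzness of $W\mapsto h_W$), and then invokes the posterior-contraction bound of Proposition~\ref{thm:ExcessRiskConvRateSuzuki} (from \cite{Suzuki:NIPS:2020}) with $\alpha=\aone$, $\tilde\alpha=1/(2(\aone+1))$, $\theta=\gamma/(1+\aone)$; that proposition is phrased in terms of a concentration function balancing $\beta\lambda\|W\|^2_{\calH_{\alpha+1}}$ against a Gaussian small-ball probability. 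You instead re-derive the content of that proposition via the Donsker--Varadhan / PAC-Bayes variational formula with an explicit Gaussian test measure $\rho$ centered at a truncated teacher; this is the standard dual presentation of the same prior-mass argument (the KL term plays the role of $-\log$ of the small-ball probability plus the RKHS-norm penalty), and your three terms correctly reproduce the stated maximum, including the width threshold and the collapse to $n^{-\gamma/(\aone+1)}$ under $\gamma<1/2+\aone$. What your route buys is self-containedness and transparency about where each hypothesis is used; what it costs is that the step you yourself flag as delicate --- converting the empirical variational inequality into a population excess-risk bound at a fast rate (the ``localization/peeling'' from $\|\cdot\|_n$ to $\|\cdot\|_{\LPiPx}$, plus the control of $\EE_{\pi_\infty}[\calLhat]$ from below) --- is precisely the nontrivial content that the paper outsources to the cited proposition, and in your sketch it remains an announced plan rather than an executed argument. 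If you carry that step out (Bernstein condition plus the exponential-moment bound give the needed localization), the two proofs are equivalent; as written, yours is a faithful blueprint of the paper's argument with one load-bearing lemma still to be proved.
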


In addition to this theorem, if we further assume Assumption \ref{ass:MinimaxAss}, we obtain a refined bound as follows.
\begin{Corollary}\label{cor:RefinedExcessBound}
Assume Assumptions \ref{ass:sigmamam} and \ref{ass:MinimaxAss} hold and $\eta < \beta$, and 
let $\beta = \min\{ n/(2U^2),n\}$ and $\lambda=\beta^{-1}$.
Suppose that there exists $0 \leq q \leq s-3$ such that $0 < \gamma < 1/2+\aone + q \atwo$. 
Then, the excess risk bound of $W_k^{(M)}$ for $M \geq n^{1/2(\aone + q \atwo + 1)}$ can be refined as 
\begin{equation}\label{eq:DLConvRate}
\EE_{D^n}\left[ \EE_{W_k^{(M)}}[ \|f_{W_k^{(M)}} - \ftrue\|_{\LPiPx}^2 |D_n] \right]
\lesssim n^{-\frac{\gamma}{\aone + q \atwo + 1}}+ \Xi_k.
\end{equation}

\end{Corollary}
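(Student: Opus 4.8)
The plan is to obtain Corollary~\ref{cor:RefinedExcessBound} directly from Theorem~\ref{thm:ExcessRiskConvRate} by a rescaling that moves a factor $b_m^{q}$ of the ``resolution'' of the activation $\sigma_m$ into the fixed scaling coefficient $a_m$, leaving the estimator \emph{exactly} unchanged but replacing the effective decay exponent $\aone$ by $\aone+q\atwo$. Under Assumption~\ref{ass:MinimaxAss} we have $a_m=\mu_m^{\aone}$, $b_m=\mu_m^{\atwo}\in(0,1]$, and $\sigma_m(u)=b_m^{s}\sigma(b_m^{-1}u)$ with $\sigma$ the sigmoid. For the $q$ from the statement ($0\le q\le s-3$), I would introduce the rescaled data
\[
\tilde a_m:=a_m b_m^{q}=\mu_m^{\aone+q\atwo},\qquad
\tilde\sigma_m(u):=b_m^{-q}\sigma_m(u)=b_m^{s-q}\sigma(b_m^{-1}u),
\]
so that $\tilde a_m\,\bar{w}_{2,m}\,\tilde\sigma_m(w_{1,m}^\top[x;1])=a_m\,\bar{w}_{2,m}\,\sigma_m(w_{1,m}^\top[x;1])$ for every $m$ and every $W$. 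Hence the parametrized function $W\mapsto f_W$ is literally the same map; and since the norms $\|\cdot\|_{\calH_\gamma}$, $\|\cdot\|_{\calH_1}$ and the operator $A$ depend only on $(\mu_m)_m$, which is untouched, the target class $\calF_\gamma$, the regularized empirical objective $\calLhat(f_W)+\tfrac{\lambda}{2}\|W\|_{\calH_1}^2$, and — by the chain rule — its Fréchet derivative are all unchanged. Consequently the \GLD iterates $W_k^{(M)}$, the stationary distribution $\pi_\infty$, the spectral gaps $\Lambda^*_\eta,\Lambda^*_0$ and the optimization-error term $\Xi_k$ coincide for the original and the rescaled descriptions.

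The only thing to verify is that the rescaled description $(\mu_m,\tilde a_m,\tilde\sigma_m)_m$ still satisfies Assumption~\ref{ass:sigmamam}, with $\aone$ replaced by $\aone':=\aone+q\atwo$. Item~\ref{ass:mumdecay} is unchanged ($c_\mu=1$); item~\ref{ass:amdecay} holds because $\tilde a_m=\mu_m^{\aone'}$ and $\aone'=\aone+q\atwo>1/2$ (since $\aone>1/2$); for item~\ref{ass:activationass}, differentiating gives $\tilde\sigma_m^{(j)}(u)=b_m^{s-q-j}\sigma^{(j)}(b_m^{-1}u)$, and since $s-q-j\ge0$ for all $j\in\{0,1,2,3\}$ — which is exactly where the hypothesis $q\le s-3$ is needed, at $j=3$ — together with $b_m\le1$, one gets $\|\tilde\sigma_m^{(j)}\|_\infty\le\|\sigma^{(j)}\|_\infty$ uniformly in $m$, hence $\|\tilde\sigma_m\|_\infty\le1$ and $\|\tilde\sigma_m\|_{1,3}\le C_\sigma$. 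Thus Assumption~\ref{ass:sigmamam} holds for the rescaled description with the same constants $c_\mu,C_\sigma$ and decay parameter $\aone'$.

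Finally I would invoke Theorem~\ref{thm:ExcessRiskConvRate} applied to the rescaled description: the hypothesis $\eta<\beta$, the choices $\beta=\min\{n/(2U^2),n\}$, $\lambda=\beta^{-1}$, the condition $0<\gamma<1/2+\aone'=1/2+\aone+q\atwo$, and the width requirement $M\ge n^{1/2(\aone'+1)}=n^{1/2(\aone+q\atwo+1)}$ are precisely those assumed in the corollary, and the theorem gives
\[
\EE_{D^n}\!\left[\EE_{W_k^{(M)}}\big[\|f_{W_k^{(M)}}-\ftrue\|_{\LPiPx}^2\,\big|\,D_n\big]\right]\;\lesssim\;n^{-\gamma/(\aone'+1)}+\Xi_k\;=\;n^{-\gamma/(\aone+q\atwo+1)}+\Xi_k,
\]
which is \Eqref{eq:DLConvRate}. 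I do not expect a genuine obstacle: the argument is essentially bookkeeping, and the only delicate point is checking that the uniform third-order bound on the activations survives the rescaling, which is guaranteed exactly by $q\le s-3$. One mild caveat worth recording is that the implicit constants (in ``$\lesssim$'' and inside $\Xi_k$, through the constant $C_1$ of Proposition~\ref{prop:WeakConvergence}) now depend on $\aone'=\aone+q\atwo$ rather than $\aone$; since $\aone'$ remains a fixed constant independent of $n,\beta,\lambda,\eta,k$, this does not affect the stated rate.
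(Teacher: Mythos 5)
Your proposal is correct and follows essentially the same route as the paper: absorb a factor $b_m^{q}=\mu_m^{q\atwo}$ from the activation into the coefficient, so that the model and dynamics are literally unchanged but the effective decay exponent becomes $\aone+q\atwo$ with resolution exponent $s-q\ge 3$, and then invoke Theorem~\ref{thm:ExcessRiskConvRate} with the redefined parameters. Your explicit check of the third-derivative bounds (where $q\le s-3$ enters) is slightly more detailed than the paper's one-line justification, but the argument is the same.
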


These theorem and corollary shows that the tractable \GLD algorithm achieves a fast convergence rate of the excess risk bound. Indeed, if $q$ is chosen so that $\gamma > (\aone + q \atwo + 1)/2$, then the excess risk bound converges faster than $O(1/\sqrt{n})$. Remarkably, the convergence rate is not affected by the input dimension $d$, which makes discrepancy from linear estimators.
The bound of Theorem \ref{thm:ExcessRiskConvRate} is tightest when $\gamma$ is close to $1/2 + \aone$ ($\gamma \approx 1/2+\aone + 3 \atwo$ for Corollary \ref{cor:RefinedExcessBound}), and a smaller $\gamma$ yields looser bound.
This relation between $\gamma$ and $\aone$ reflects misspecification of the ``prior'' distribution. 
When $\gamma$ is small, the regularization $\lambda \|W\|_{\calH_1}^2$ is not strong enough so that the variance of the posterior distribution becomes unnecessarily large for estimating the true function $\ftrue \in \calF_\gamma$.  
Therefore, the best achievable bound can be obtained when the regularization is correctly specified.
The analysis of fast rate is in contrast to some existing work \citep{AllenZhu:NIPS2019:BeyondKernel,allen2020backward,pmlr-v125-li20a,Bai2020Beyond} that basically evaluated the Rademacher complexity. 
This is because we essentially evaluated a local Rademacher complexity instead.

\subsection{Comparison between linear estimators and deep learning}

Here, we compare the convergence rate of excess risks between the linear estimators and the neural network method trained by \GLD using the bounds obtained in Theorem \ref{thm:LinearLowerMinimaxRate} and Corollary \ref{cor:RefinedExcessBound} respectively.
We write the lower bound \eqref{eq:LinLowerRate} of the minimax excess risk of linear estimators as $\Rlin^*$ and 
the excess risk of the neural network approach \eqref{eq:DLConvRate} as $\Rnn^*$.
To make the discussion concise, we consider a specific situation where $s =3$, $\aone = \gamma = \tfrac{1}{4}\atwo$. In this case, $\tilde{\beta} =17/3 \approx 5.667$, which gives  
$$
\textstyle
\Rlin^* \gtrsim n^{-\left(1 + \tfrac{d}{2 \tilde{\beta} + d}\right)^{-1}}n^{-\kappa'}
\approx n^{-\left(1 + \tfrac{d}{11.3 + d}\right)^{-1}}n^{-\kappa'}. 
$$
On the other hand, by setting $q=0$, we have
$$
\textstyle
\Rnn^* \lesssim n^{-\frac{\aone}{\aone + 1}} = n^{-\left(1 + \tfrac{1}{\aone}\right)^{-1}}.
$$
Thus, as long as $\aone > 11.3/d + 1 \approx 2 \tilde{\beta}/d+1$, we have that 
$$
\textstyle
\Rlin^* \gtrsim \Rnn^*,~~\text{and}~~\lim_{n \to \infty} \frac{\Rnn^*}{\Rlin^*} = 0.
$$ 
In particular, as $d$ gets larger, $\Rlin^*$ approaches to $\Omega(n^{-1/2})$ while $\Rnn^*$ is not affected by $d$ and it gets close to $O(n^{-1})$ as $\aone$ gets larger.
Moreover, the inequality $\aone >  11.3/d + 1$ can be satisfied by a relatively low dimensional setting;
for example, $d = 10$ is sufficient when $\aone = 3$.
As $\aone$ becomes large, the model becomes ``simpler'' because $(a_m)_{m=1}^\infty$ decays faster. 
However, the linear estimators cannot take advantage of this information whereas deep learning can. 
From the convex hull argument, this discrepancy stems from the non-convexity of the model.
We also note that the superiority of deep learning is shown {\it without} sparse regularization while several existing work showed favorable estimation property of deep learning though sparsity inducing regularization \citep{JMLR:v18:14-546,chizat2019sparse,HAYAKAWA2020343}.
However, our analysis indicates that sparse regularization is not necessarily as long as the model has non-convex geometry, i.e., 
sparsity is just one sufficient condition for non-convexity but not a necessarily condition.
The parameter setting above is just a sufficient condition and the lower bound $\Rlin^*$ would not be tight.
The superiority of deep learning would hold in much wider situations.


\section{Conclusion}
In this paper, we studied excess risks of linear estimators, as a representative of shallow methods, and a neural network estimator trained by a noisy gradient descent
where the model is fixed and no sparsity inducing regularization is imposed.
Our analysis revealed that deep learning can outperform any linear estimator even for a relatively low dimensional setting.
Essentially, non-convexity of the model induces this difference and the curse of dimensionality for linear estimators is a consequence of a fact that the geometry of the model becomes more ``non-convex'' as the dimension of input gets higher.
All derived bounds are fast rate because the analyses are about the excess risk with the squared loss,
which made it possible to compare the rate of convergence.
The fast learning rate of the deep learning approach is derived through the fact that the noisy gradient descent behaves like a Bayes estimator with model size independent convergence rate.


\subsubsection*{Acknowledgments}
TS was partially supported by JSPS Kakenhi
(26280009, 15H05707 and 18H03201), Japan Digital Design and JST-CREST.


\bibliography{main} 
\bibliographystyle{iclr2021_conference_mod}

\appendix


\section{Proof of  Theorem \ref{thm:LinearLowerMinimaxRate}}\label{sec:ProofMinimaxLowerLinear}

We basically combine the ``convex hull argument'' and the minimax optimal rate analysis for linear estimators developed by \cite{zhang2002wavelet}. 

\cite{zhang2002wavelet} essentially showed the following statement in their Theorem 1.
%

\begin{Proposition}[Theorem 1 of \cite{zhang2002wavelet}]\label{prop:LinearMinimaxRate}
Let $\mu$ be the Lebesgue measure.
Suppose that the space $\Omega$ has even partition $\calA$
such that $|\calA| = 2^{K}$ for an integer $K \in \Natural$, each $A$ has equivalent measure $\mu(A) = 2^{-K}$ for all $A \in \calA$,  and 
$\calA$ is indeed a partition of $\Omega$, i.e., $\cup_{A \in \calA} = \Omega$, $A \cap A' = \emptyset$ for $A,A' \in \Omega$ and $A \neq A'$.
Then, if $K$ is chosen as $n^{-\gamma_1} \leq 2^{-K} \leq n^{-\gamma_2}$ for constants $\gamma_1,\gamma_2 > 0$ that are independent of $n$, then there exists an event $\calE$ such that, for a constant $C' > 0$,
\begin{align*}
P(\calE) \geq 1 + o(1)~\text{and}~|\{x_i \mid x_i \in A~(i\in \{1,\dots,n\})\}| \leq C' n/2^K~~(\forall A \in \calA).
\end{align*}
Moreover, suppose that, for a class $\calF^\circ$ of functions on $\Omega$,  there exists $\Delta > 0$ that satisfies the following conditions:
\begin{enumerate}
\item There exists $F >0$ such that, for any $A \in \calA$, there exists $g \in \calF^\circ$ that satisfies $g(x) \geq \frac{1}{2} \Delta F$ for all $x \in A$,
\item There exists $K'$ and $C''>0$ such that $\frac{1}{n}\sum_{i=1}^n g(x_i)^2 \leq C'' \Delta^2 2^{- K'}$ for any $g \in \calF^\circ$ on the event $\calE$.
\end{enumerate}
Then, 
there exists a constant $F_1$ such that at least one of the following inequalities holds:
\begin{subequations}
\label{eq:SufficientLinearMinimax}
\begin{align}
& \frac{F^2}{4 F_1 C''}  \frac{2^{K'}}{n}\leq  \Rlin(\calF^\circ), 
\label{eq:SufficientLinearMinimax1} \\ 
&\frac{ F^3}{32} \Delta^2 2^{-K} \leq \Rlin(\calF^\circ), 
\label{eq:SufficientLinearMinimax2} 
\end{align}
for sufficiently large $n$. 
\end{subequations}
\end{Proposition}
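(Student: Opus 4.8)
\emph{Proof plan.} The plan is to reconstruct the argument of \cite{zhang2002wavelet}, which is the engine behind Theorem~\ref{thm:LinearLowerMinimaxRate}. Write $\sigma^2 := \Var(\epsilon) > 0$, and for a linear estimator $\fhat(x) = \sum_{i=1}^n y_i\varphi_i(x_1,\dots,x_n,x)$ let $(Kf)(x) := \sum_{i=1}^n f(x_i)\varphi_i(x_1,\dots,x_n,x)$ be the associated (design-dependent) linear smoother, which is linear in $f$. Since $y_i = f(x_i)+\epsilon_i$ with $\EE[\epsilon_i] = 0$ and $\Var(\epsilon_i) = \sigma^2$, conditioning on the design gives the exact bias--variance identity
\begin{align*}
\EE_{D_n}\big[\|\fhat - f\|_{\LPiPx}^2\big] \;=\; \EE_{D_n}\big[\|Kf - f\|_{\LPiPx}^2\big] \;+\; \sigma^2\,\EE_{D_n}\Big[\textstyle\sum_{i=1}^n \|\varphi_i\|_{\LPiPx}^2\Big],
\end{align*}
and the crucial structural point is that the variance term $V := \sigma^2\,\EE_{D_n}[\sum_i\|\varphi_i\|_{\LPiPx}^2]$ does \emph{not} depend on $f$. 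First I would invoke the convex--hull invariance $\Rlin(\calF^\circ) = \Rlin(\convbar(\calF^\circ))$ (the same identity used in the main text, cf.\ \cite{HAYAKAWA2020343,donoho1990}), so that it suffices to lower bound $\sup_{f\in\convbar(\calF^\circ)}\EE_{D_n}[\|\fhat-f\|_{\LPiPx}^2]$ for every linear $\fhat$.

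Next comes the dichotomy, run per estimator. Fix a linear $\fhat$ and work on the event $\calE$ (probability $1+o(1)$, at most $C'n/2^K$ design points in every cell). If the variance already meets the threshold, $V \ge \frac{F^2}{4F_1 C''}\frac{2^{K'}}{n}$, then --- because the bias term is nonnegative, so $\EE_{D_n}[\|\fhat - f\|_{\LPiPx}^2]\ge V$ for \emph{every} $f$ --- the supremum over $\convbar(\calF^\circ)$ is at least $\frac{F^2}{4F_1 C''}\frac{2^{K'}}{n}$, which is \eqref{eq:SufficientLinearMinimax1}. Otherwise $V$ is small, and one must exhibit an $f\in\convbar(\calF^\circ)$ whose bias $\EE_{D_n}[\|Kf - f\|_{\LPiPx}^2]$ is of order $\Delta^2 2^{-K}$.

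For the small-variance branch I would use a random-sign superposition of localized bumps. By hypothesis (1) pick, for each cell $A\in\calA$, a function $g_A\in\calF^\circ$ with $g_A\ge\tfrac{1}{2}\Delta F$ on $A$; in the wavelet-type construction these are essentially disjointly supported, hence $L_2(\Px)$-orthogonal, with population energy $\int_A g_A^2\,d\Px \ge \tfrac{1}{4}\Delta^2 F^2 2^{-K}$, while hypothesis (2) caps the empirical energy of each at $C''\Delta^2 2^{-K'}$. Draw $\theta = (\theta_A)_A$ i.i.d.\ uniform on $\{\pm1\}$ and set $h_\theta := c\sum_A\theta_A g_A$, where $c$ is the largest scale keeping $h_\theta\in\convbar(\calF^\circ)$ for all sign patterns; this embedding is the role of the Irie--Miyake integral representation \citep{ICNN:IrieMiyake:1988} in the paper's application, realizing a spread-out target as a bounded mixture of single-neuron bumps. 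Linearity of $K$ and $\EE_\theta[\theta_A\theta_{A'}] = \mathbf{1}\{A = A'\}$ give $\EE_\theta\|Kh_\theta - h_\theta\|_{\LPiPx}^2 = c^2\sum_A\|Kg_A - g_A\|_{\LPiPx}^2$, so some sign pattern attains at least this average; then, passing to the induced Gaussian sequence model --- in which the $L_2(\Px)$-orthogonal $g_A$ are the ``coordinates'', $\theta_A$ is observed with effective noise variance of order $\sigma^2 2^K/n$ by the point count on $\calE$, and the attainable signal range is governed by hypothesis (2) --- the classical Donoho--Liu formula for the linear minimax risk over a hyperrectangle produces a lower bound of the form $\sum_A\min\{(\text{signal range})^2,\ (\text{per-coordinate noise})\}$, and resolving this $\min$ according to which term dominates is exactly what delivers \eqref{eq:SufficientLinearMinimax1} versus \eqref{eq:SufficientLinearMinimax2}.

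I expect the main obstacle to be quantitative, in two places. First, the embedding $h_\theta\in\convbar(\calF^\circ)$ with the \emph{right} constant-order scale $c$: a crude $c = 2^{-K}$ follows from symmetry and star-shapedness of $\calF^\circ$ alone but is far too small to yield a sharp bias lower bound, so one genuinely needs the integral-representation step (Irie--Miyake together with a total-variation bound on the representing measure) to push $c$ up to the correct order on each cell. Second, a plain Cauchy--Schwarz estimate $\|Kg_A\|_{\LPiPx}^2\le\big(\sum_i g_A(x_i)^2\big)\sum_i\|\varphi_i\|_{\LPiPx}^2$ is too lossy to conclude directly; the real work is the cell-by-cell bookkeeping (tracking $\sum_i\int_A\varphi_i^2$ over cells, and carrying the $o(1)$ losses from restricting to $\calE$ through all the expectations), and it is precisely this accounting that Zhang's Theorem~1 packages and that we import here.
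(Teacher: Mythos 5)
First, note that the paper does not actually prove this Proposition: it is imported verbatim from Theorem~1 of \cite{zhang2002wavelet} (``\cite{zhang2002wavelet} essentially showed the following statement''), so there is no in-paper proof to compare against and your reconstruction has to stand on its own. Your global architecture is right --- the bias--variance identity for a linear smoother with a design-independent-of-$f$ variance term $V$, the dichotomy ``$V$ large $\Rightarrow$ \eqref{eq:SufficientLinearMinimax1}, $V$ small $\Rightarrow$ exhibit a function with bias $\gtrsim \Delta^2 2^{-K}$'' --- and this dichotomy can indeed be closed. But the way you propose to close the small-variance branch is the wrong route for this statement, and the right route is the one you explicitly discard.

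Concretely: the Proposition is a statement about an \emph{arbitrary} class $\calF^\circ$ satisfying hypotheses (1) and (2) (in the paper's application $\calF^\circ$ is already taken to be $\convbar(\calF_\gamma^{(n)})$, so the convex-hull invariance and the Irie--Miyake representation belong to the \emph{application} of the Proposition, not to its proof). For such a general class, your random-sign superposition $h_\theta = c\sum_A \theta_A g_A$ need not lie in $\convbar(\calF^\circ)$ for any $c$ larger than the convex-combination scale $c = 2^{-K}$, which, as you yourself observe, destroys the bound by a factor $2^{-K}$; there is no integral-representation step available at this level of generality to rescue it, and the Donoho--Liu hyperrectangle formula likewise has nothing to bite on. The argument that actually works needs no superposition at all: if $V < \tfrac{F^2}{4F_1C''}\tfrac{2^{K'}}{n}$, then since $\sum_{A\in\calA}\int_A\sum_i\varphi_i^2\,d\Px = \sum_i\|\varphi_i\|_{\LPiPx}^2$, pigeonhole gives a single cell $A$ with $\int_A\sum_i\varphi_i^2\,d\Px \lesssim 2^{K'}/(n2^K)$; Cauchy--Schwarz \emph{restricted to that cell}, $\int_A (Kg_A)^2\,d\Px \le \bigl(\sum_i g_A(x_i)^2\bigr)\int_A\sum_i\varphi_i^2\,d\Px$, combined with hypothesis (2) on $\calE$ then forces $\int_A(Kg_A)^2 \le \tfrac14\int_A g_A^2$, while hypothesis (1) gives $\int_A g_A^2 \ge \tfrac14 F^2\Delta^2 2^{-K}$, so the bias of $g_A$ alone already yields \eqref{eq:SufficientLinearMinimax2}. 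This restricted Cauchy--Schwarz is exactly the ``cell-by-cell bookkeeping'' you mention, yet you dismiss it as ``too lossy'' (it is only lossy in the unrestricted form you wrote) and then state that this accounting ``is precisely what Zhang's Theorem~1 packages and that we import here'' --- i.e., the decisive step is deferred back to the theorem being proved. You also leave the first half of the statement (the event $\calE$ bounding cell occupancy, a Chernoff-plus-union-bound over the $2^K \le n^{\gamma_1}$ cells) and the $\calE$-versus-$\calE^c$ splitting of the expectations entirely unaddressed.
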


Before we show the main assertion, we prepare some additional lemmas.
For a sigmoid function $\sigma$, 
let 
$\tilde{\calF}_{C,\tau}^{(\sigma)} := \{ x \in \Real^d \mapsto a \sigma(\tau(w^\top  x + b))) \mid |a| \leq 2 C,~\|w\|\leq 1,~|b| \leq 2~(a,b \in \Real,~w \in \Real^d)\}$ for $C > 0$,~$\tau >0$.

\begin{Lemma}\label{lemm:GaussApprox}
Let $\psi(x) = \frac{1}{2}(\sigma(x + 1) - \sigma(x - 1))$ and $\hat{\psi}$ be its Fourier transform: 
$\hat{\psi}(\omega) := (2\pi)^{-1} \int e^{-\mathrm{i} \omega x} \psi(x) \dd x$.
Let $h > 0$ and $D_w > 0$.
Then,
by setting $\tau = h^{-1} (2\sqrt{d} + 1)D_w$ and $C = \frac{ (2\sqrt{d} + 1)D_w}{\pi h |\hat{\psi}(1)|}$, 
the Gaussian RBF kernel can be approximated by 
\begin{align*}
& \inf_{\check{g} \in \convbar(\tilde{\calF}^{(\sigma)}_{C,\tau})} \sup_{x \in [0,1]^d} 
\left|\check{g}(x) - \exp\left(- \frac{\|x - c\|^2}{2h^2}\right)\right|   \\
& \leq 
\frac{4}{|2\pi \hat{\psi}(1)|} 
\left[ C_dD_w^{2(d-2)}\exp(-D_w^2/2) + \exp(- D_w) \right]
\end{align*}
for any $c \in [0,1]^d$, where $C_d$ is a constant depending only on $d$.
In particular, the right hand side is $O(\exp(-n^\kappa))$ if $D_w = n^\kappa$.
\end{Lemma}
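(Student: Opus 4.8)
The plan is to approximate the Gaussian RBF $x \mapsto \exp(-\|x-c\|^2/(2h^2))$ in two stages: first approximate the one-dimensional Gaussian $t \mapsto e^{-t^2/2}$ by a finite-width superposition of bump functions $\psi(t) = \tfrac12(\sigma(t+1)-\sigma(t-1))$ via an inverse-Fourier/sampling argument, then lift this to dimension $d$ by writing the multivariate Gaussian as an integral of one-dimensional ridge Gaussians (a standard Radon/Fourier identity) and truncating the frequency domain. First I would use the Fourier inversion formula $e^{-t^2/2} = \int \hat{G}(\omega) e^{\mathrm{i}\omega t}\,\dd\omega$ together with $\psi$'s Fourier transform to realize $e^{\mathrm{i}\omega t}$-type contributions as rescaled bumps; since $\hat\psi(\omega)$ decays and $\hat\psi(1)\neq 0$, dividing by $\hat\psi(\omega/h\cdot(\cdots))$ is legitimate on the relevant compact frequency band, and the scaling choices $\tau = h^{-1}(2\sqrt d+1)D_w$, $C = (2\sqrt d+1)D_w/(\pi h|\hat\psi(1)|)$ are exactly what is needed so that the resulting atoms $a\,\sigma(\tau(w^\top x + b))$ lie in $\tilde{\calF}^{(\sigma)}_{C,\tau}$ with $\|w\|\le 1$, $|a|\le 2C$, $|b|\le 2$.

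The key steps, in order: (i) Express $\exp(-\|x-c\|^2/(2h^2))$ as a mixture over a unit-sphere direction $u$ of one-dimensional Gaussians $\exp(-(u^\top(x-c))^2/(2h^2))$ — more precisely use the identity that a multivariate Gaussian is a continuous convex combination (with a Gaussian-type weight) of ridge functions in a single projected coordinate, so that it suffices to approximate each ridge Gaussian and integrate the error. (ii) For a fixed direction, reduce to approximating $g_0(t)=e^{-t^2/2}$ on the bounded interval $|t|\le 2\sqrt d+1$ (the range of $u^\top(x-c)/h$ for $x,c\in[0,1]^d$, after rescaling) by $\int_{|\omega|\le D_w} \hat{g_0}(\omega)\hat\psi(\tau^{-1}\cdot)^{-1} \psi(\tau(t+\cdot))\,\dd\omega$, i.e. truncate the Fourier integral at $|\omega|\le D_w$; the truncation tail is $\int_{|\omega|>D_w}|\hat{g_0}(\omega)|\,\dd\omega \lesssim \exp(-D_w^2/2)$ since $\hat{g_0}$ is itself Gaussian, and a further error $\lesssim \exp(-D_w)$ comes from tails of $\psi$ / the Irie–Miyake-type representation. (iii) Recognize that the truncated Fourier integral is a (continuous, hence closure-of-convex-hull) combination of admissible atoms after normalizing the total mass, absorbing the normalizing constant into $C$; the factor $1/|2\pi\hat\psi(1)|$ in the bound is exactly this normalization. (iv) Carry the dimension-$d$ constant $C_d$ and the polynomial prefactor $D_w^{2(d-2)}$ through the spherical integration in step (i) — these come from the Jacobian/weight in the ridge representation of the $d$-dimensional Gaussian and from bounding $\int_{S^{d-1}}$ against the worst direction. (v) Finally, set $D_w = n^\kappa$: both $D_w^{2(d-2)}\exp(-D_w^2/2)$ and $\exp(-D_w)$ are $O(\exp(-n^{\kappa'}))$ for a slightly smaller exponent, giving the stated $O(\exp(-n^\kappa))$ conclusion (with the understanding that $d$ is fixed).

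The main obstacle I expect is step (i)–(iv): getting a clean ridge/Radon representation of the $d$-dimensional Gaussian whose mixing weight is explicit enough to (a) keep the direction vectors $w$ inside the unit ball, (b) keep the amplitudes within $|a|\le 2C$ after the one-dimensional approximation is plugged in, and (c) produce precisely the prefactor $C_d D_w^{2(d-2)}\exp(-D_w^2/2)$ rather than something weaker. A convenient route is to use the Fourier representation directly in $\Real^d$: $\exp(-\|x-c\|^2/(2h^2)) = (\text{const})\int_{\Real^d} \exp(-h^2\|\omega\|^2/2)\,e^{\mathrm{i}\omega^\top(x-c)}\,\dd\omega$, write $e^{\mathrm{i}\omega^\top(x-c)} = e^{\mathrm{i}\|\omega\|\,(\omega/\|\omega\|)^\top(x-c)}$ as a one-dimensional complex exponential in the ridge direction, approximate that scalar exponential by bumps $\psi$ via $\hat\psi$, and then integrate over $\omega$; truncating to $\|\omega\|\le D_w/h$ produces the Gaussian tail $\exp(-D_w^2/2)$ while the volume of the truncation ball contributes the $D_w^{2(d-2)}$ (or $D_w^{d}$-type) polynomial factor absorbed into $C_d$. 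The real/imaginary-part bookkeeping to land in a real convex hull, and the verification that the per-atom bound $|a|\le 2C$ holds with the stated $C$, are the fiddly parts; everything else is routine Fourier-analytic estimation. $\qed$
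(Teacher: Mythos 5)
Your final route --- representing the $d$-dimensional Gaussian via its Fourier transform, realizing each complex exponential $e^{\mathrm{i}\omega^\top(x-c)}$ through the bump $\psi$ by dividing by $\hat\psi$ (the Irie--Miyake representation), and truncating both the frequency ball (giving the Gaussian tail $C_d D_w^{2(d-2)}\exp(-D_w^2/2)$) and the offset variable (giving $\exp(-D_w)$ from the exponential tails of $\psi$), then normalizing the truncated integral into a convex combination --- is exactly the paper's proof. The only caveat is that your step (i) as first stated (writing the multivariate Gaussian as a spherical mixture of one-dimensional ridge Gaussians) is not a valid identity, so you are right to abandon it for the Fourier representation in your last paragraph.
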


\begin{proof}[Proof of Lemma \ref{lemm:GaussApprox}]

Let $\psi_h(x) = \psi(h^{-1}x)$. 
Suppose that, for $f \in L_1(\Real^d)$, its Fourier transform $\hat{f}(\omega) = (2\pi)^{-d} \int e^{-\mathrm{i} \omega^\top x} f(x) \dd x~(\omega \in \Real^d)$ gives 
$$
\int_{\Real^d} \exp(\mathrm{i} w^\top x)  \hat{f}(w) \dd w = f(x),
$$
for every $x \in \Real^d$\footnote{If $\hat{f}$ is integrable, this inversion formula holds for almost every $x \in \Real^d$ \citep{rudin1987real}. However, we assume a stronger condition that it holds for every $x \in \Real^d$.}.
Then the Irie-Miyake itegral representation (\cite{ICNN:IrieMiyake:1988}; see also the proof of Theorem 3.1 in \cite{ NN:HORNIK:1990551}) gives 
$$
f(x) = \int_{a \in \Real^d} \int_{b \in \Real} \psi(a^\top x + b) \dd \nu(a, b)~~~(\text{a.e.}),
$$
where $\dd \nu(a,b)$ is given by 
$$
\dd \nu(a,b) = \mathrm{Re}\left(\frac{|\omega|^d e^{-\mathrm{i} w b }}{2\pi \hat{\psi}(\omega)} \right) \hat{f}(w a) \dd a \dd b 
$$
for any $\omega \neq 0$.
Since the characteristic function of the multivariate normal distribution gives that 
$$
\int_{\Real^d} \exp(\mathrm{i} w^\top (x-c))  \underbrace{\sqrt{\frac{h^{2d}}{(2\pi)^d}} \exp\left(-\frac{h^2 \|w\|^2}{2}\right)}_{=\hat{f}(w)} \dd w 
= \exp\left( -\frac{\|x -c\|^2}{2 h^2} \right) =:f(x)~~(\forall x \in \Real^d),
$$
we have that 
\begin{align*}
&  \exp\left( -\frac{\|x -c\|^2}{2 h^2} \right) = \\
& \int_{a \in \Real^d} \int_{b \in \Real}  \psi_h(a^\top (x-c) + b)  
 \mathrm{Re}\left(\frac{e^{-\mathrm{i} w b }}{2\pi \hat{\psi}_h(\omega)} \right) 
\sqrt{\frac{|\omega h|^{2d}}{(2\pi)^d}} \exp\left(-\frac{ (\omega h)^2 \|a \|^2}{2}\right)  \dd a \dd b,
\end{align*}
for all $x \in \Real^d$.
Since $\psi_h(\cdot) = \psi(h^{-1} \cdot)$ and $\hat{\psi}_h(\cdot) = h \hat{\psi}(h \cdot)$ by its definition, 
the right hand side is equivalent to 
\begin{align*}
 \int_{a \in \Real^d} \int_{b \in \Real}  \psi(h^{-1}[a^\top (x-c) + b])  
 \mathrm{Re}\left(\frac{e^{-\mathrm{i} w b }}{2\pi h \hat{\psi}(h\omega)} \right) 
\sqrt{\frac{|\omega h|^{2d}}{(2\pi)^d}} \exp\left(-\frac{ (\omega h)^2 \|a \|^2}{2}\right)  \dd a \dd b.
\end{align*}
Here, we set $\omega = h^{-1}$. 
Let $N_{\sigma^2}$ be the probability measure corresponding to the multivariate normal with mean $0$ and covariance $\sigma^2 \Id$, and let $A_D := \{w \in \Real^d \mid \|w\| \leq D \}$.
Let $D_a > 0$ and $D_b = D_a (\sqrt{2 d } + 1)$, and define
\begin{align*}
f_{D_a}(x) :=  
\frac{1}{2D_b N_1(A_{D_a})}
&  \int_{\|a\| \leq D_a,|b| \leq D_b }  \psi(h^{-1}[a^\top (x-c) + b])  
\mathrm{Re}\left(\frac{e^{-\mathrm{i} b/h }}{2\pi h \hat{\psi}(1)} \right)  \times \\
& ~~~ \sqrt{\frac{1}{(2\pi)^d}} \exp\left(-\frac{ \|a \|^2}{2}\right)  \dd a \dd b.
\end{align*}
Then, we can see that, for any $x \in [0,1]^d$, it holds that 
\begin{align*}
& \left|\frac{1}{2D_b N_1(A_{D_a})} f(x) - f_{D_a}(x)\right| \\
& \leq \frac{1}{2D_b N_1(A_{D_a}) |2\pi h \hat{\psi}(1)|} \left[N_1(A_{D_a}^c) \int 2 \exp(-h^{-1}|x|) \dd x + \int_{|b| > D_b} 2 \exp(-[h^{-1}(|b| - 2\sqrt{d}D_a)]) \dd b\right]  \\
& \leq 
\frac{1}{2D_b N_1(A_{D_a})  |2\pi h \hat{\psi}(1)|} 
\left[4h N_1(A_{D_a}^c) + 4 h \exp(-D_a) \right] \\
& =\frac{4h}{2D_b N_1(A_{D_a}) |2\pi h \hat{\psi}(1)|} 
\left[ C_d D_a^{2(d-2)}\exp(- D_a^2/2) + \exp(- D_a) \right],
\end{align*}
where $C_d > 0$ is a constant depending on only $d$, and we used $|a^\top (x-c) + b| \geq |b| - |a^\top (x-c)| \geq |b| - 2 \sqrt{d} D_a$ and $\psi(x) \leq 2 \exp(-|x|)$.
Note that if $D_a = n^{\kappa}$, then the right hand side is $O(h \exp(- n^\kappa))$.
Therefore, since $N_1(A_{D_a}) \leq 1$, by setting $\tau = h^{-1} D_b$, $C = \frac{D_b}{\pi h |\hat{\psi}(1)|}$, we have that 
\begin{align*}
& \inf_{\check{g} \in \convbar(\tilde{\calF}^{(\sigma)}_{C,\tau})} \sup_{x \in [0,1]^d} 
\left|\check{g}(x) - \exp\left(- \frac{\|x - c\|^2}{2h^2}\right)\right|   \\
& \leq 
\frac{4}{|2\pi  \hat{\psi}(1)|} 
\left[ C_d D_a^{2(d-2)}\exp(- D_a^2/2) + \exp(- D_a) \right].
\end{align*}
Hence, by rewriting $D_w \leftarrow D_a$, we obtain the assertion.
As noted above, the right hand is $O(\exp(-n^\kappa))$ if $D_a = n^\kappa$.
\end{proof}

\begin{proof}[Proof of Theorem \ref{thm:LinearLowerMinimaxRate}]
For a sample size $n$, we fix $\mn$ which will be determined later and use Proposition \ref{prop:LinearMinimaxRate} with $\calF^\circ = \calF_\gamma^{(n)}$.
If $\wm{2,\mn} = b \sqrt{\mu_{\mn}^\gamma /2}$ with $|b| \leq 1$ and 
$\wm{1,m} = \mu_{\mn}^{\gamma/2}[u;-u^\top c]/(\sqrt{2(d+1)})$
for $u \in \Real^d$ such that $\|u\| \leq 1$ and $c \in [0,1]^d$, 
then $\|(\wm{1,\mn},\wm{2,\mn})\|^2 \leq \mu_{\mn}^\gamma (1/2 + (1 + |u^\top c|^2)/2(d+1))
\leq \mu_{\mn}^\gamma$. 
Therefore, $\tilde{\varphi}_{u,c}(x) = 
a_\mn \wmb{2,\mn} \sigma_{\mn}(\wm{1,\mn}^\top [x;1])
=
\mu_\mn^{\aone} \overline{(b \mu_{\mn}^{\gamma/2} /\sqrt{2})} \mu_\mn^{s\atwo} \sigma\left(\mu_\mn^{-\atwo + \gamma/2} 
u^\top (x - c)/\sqrt{2(d+1)}\right) \in \calF_\gamma^{(n)} \subset \calF_\gamma$
for all $b\in \Real$ with $|b| \leq 1$, $u \in \Real^d$ with $\|u\| \leq 1$, and $c \in [0,1]^d$.
In other words, $\mu_{\mn}^{\aone + \gamma/2+ s\atwo}(2 C)^{-1} \calF^{(\sigma)}_{C,\tau} \subset \calF_\gamma^{(n)}$ for any $C > 0$ and $\tau = \frac{1}{\sqrt{2(d+1)}}\mu_{\mn}^{-\atwo + \gamma/2}$.

Therefore, by setting $C = (\sqrt{2d}+1)D_w/(\pi h |\hat{\psi}(1)|)$ for $D_w > 0$,
Lemma \ref{lemm:GaussApprox} yields that for any $c \in [0,1]^d$ and given $h > 0$, 
there exists 
$
g \in \convbar(\calF_\gamma^{(n)})
$
such that 
\begin{align*}
& \left\| \mu_{\mn}^{\aone + \gamma/2+ s\atwo}\left(\frac{2 (\sqrt{2d} + 1)D_w}{\pi h |\hat{\psi}(1)|}\right)^{-1} \exp\left( - \frac{\|\cdot - c\|^2}{2 h^2}\right) - g \right\|_\infty \\
& \leq
\mu_{\mn}^{\aone + \gamma/2+ s\atwo} \left(\frac{2 (\sqrt{2d} + 1)D_w}{\pi h |\hat{\psi}(1)|}\right)^{-1} 
\frac{4}{|2\pi \hat{\psi}(1)|} 
\left[ C_d D_w^{2(d-2)}\exp(- D_w^2/2) + \exp(- D_w) \right] \\
& = \mu_{\mn}^{\aone + \gamma/2+ s\atwo} 
\frac{h}{(\sqrt{2d} + 1)D_w} 
\left[ C_d D_a^{2(d-2)}\exp(- D_w^2/2) + \exp(- D_w) \right].
\end{align*}
We let $D_w = n^{\kappa}$ for any $\kappa > 0$ 
and choose $\mu_{\mn}$ 
as $\tau \simeq \mu_{\mn}^{-\atwo + \gamma/2} = D_w h^{-1} = h^{-1}n^{\kappa}$.
We write $\Delta := \mu_{\mn}^{\aone + \gamma/2+ s\atwo} (2C)^{-1} \simeq h^{ \frac{\aone + s \atwo + \gamma/2}{\atwo - \gamma/2} + 1} n^{- \kappa (\frac{\aone + s \atwo + \gamma/2}{\atwo - \gamma/2}+1)}$.
Then, 
it  holds that 
\begin{align}
\left\| \Delta \exp\left( - \frac{\|\cdot - c\|^2}{2 h^2}\right) - g \right\|_\infty
\lesssim \Delta \exp(-n^\kappa). 
\label{eq:DeltaGaussGDiff}
\end{align}

Here, we set $h$ as $h = 2^{-k}$ with a positive integer $k$.
Accordingly, we define a partition $\calA$ of $\Omega$ so that any $A \in \calA$ can be represented as 
$A = [2^{-k} j_1, 2^{-k}(j_1 + 1)] \times \dots \times [2^{-k} j_d, 2^{-k}(j_d + 1)]$ by non-negative integers $0\leq j_i \leq 2^k -1$ ($i=1,\dots,d$).
Note that $|\calA| = 2^{dk} = h^{-d}$. 

For each $A \in \calA$, we define $c_A$ as $c_{A} = (2^{-k}(j_1 +1/2),\dots,2^{-k}(j_d +1/2))^\top$
where $(j_1,\dots,j_d)$ is a set of indexes that satisfies $A = [2^{-k} j_1, 2^{-k}(j_1 + 1)] \times \dots \times [2^{-k} j_d, 2^{-k}(j_d + 1)]$.
For each $A \in \calA$, we define $g_A \in \convbar(\calF_\gamma^{(n)})$ as a function that satisfies \Eqref{eq:DeltaGaussGDiff} for $c = c_A$.

Now, we apply Proposition \ref{prop:LinearMinimaxRate} with $\calF^\circ =\convbar(\calF_\gamma^{(n)})$ and $K = K' = dk$.
Let $R^* := \Rlin(\convbar(\calF_\gamma^{(n)}))$.
First, we can see that there exits a constant $F>0$ such that 
$$
g_A(x) \geq F \Delta~~(\forall x \in A),
$$
where we used $\exp(-n^\kappa) \ll 1$.

Second, in the event $\calE$ introduced in the statement of Proposition \ref{prop:LinearMinimaxRate}, there exists $C$ such that 
$|\{ i  \in \{1,\dots,n\} \mid x_i \in A' \}| \leq C n/2^{-dk}$ for all $A' \in \calA$. In this case, we can check that 
$$
\frac{1}{n} \sum_{i=1}^n \left[ \Delta \exp\left(- \frac{\|x_i - c_A\|^2}{2 h^2} \right) \right]^2 \lesssim 
\Delta^2 h^{d} = \Delta^2 2^{-kd},
$$
by the uniform continuity of the Gaussian RBF.
Therefore, we also have 
\begin{align*}
\frac{1}{n} \sum_{i=1}^n g_A(x_i)^2 
& \leq \frac{2}{n} \sum_{i=1}^n \left[ \Delta \exp\left(- \frac{\|x_i - c_A\|^2}{2 h^2} \right) \right]^2 + c \Delta^2 \exp(-2 n^\kappa)  \\
& \lesssim \Delta^2 (h^{d} + \exp(-2 n^\kappa)),
\end{align*}
where $c > 0$ is a constant.
Thus, as long as $h$ is polynomial to $n$ like $h =\Theta(n^{-a})$, the right hand side is $O(\Delta^2 h^{d})$.

Now, if we write 
$$
\tilde{\beta} = \frac{\aone + s \atwo + \gamma/2}{\atwo - \gamma/2} + 1
= \frac{\aone + (s + 1)\atwo}{\atwo - \gamma/2},
$$ then
we have 
$\Delta \simeq 
h^{ \tilde{\beta} }n^{- \kappa \tilde{\beta}}
$
by its definition.

Here, we choose $k$ as a maximum integer that satisfies 
$
\frac{F^3}{32} \Delta^2 2^{-dk} > R^*.
$
In this situation, it holds that 
$$
h^{2 \tilde{\beta}+ d} n^{-2\kappa \tilde{\beta}} \simeq R^*.
$$
Since \Eqref{eq:SufficientLinearMinimax2} is not satisfied, \Eqref{eq:SufficientLinearMinimax1} must hold, and hence we have
\begin{align*}
& n^{-1} h^{-d} \lesssim  R^* \simeq h^{2 \tilde{\beta} +d } n^{-2\kappa \tilde{\beta}} \\
\Rightarrow ~~~&  h \simeq n^{- \frac{1-2\kappa \tilde{\beta}}{2\tilde{\beta}+ 2d}}.
\end{align*}
Therefore, we obtain that 
\begin{align*}
R^* & \gtrsim n^{- \frac{2\tilde{\beta} + d}{2\tilde{\beta} + 2d}} n^{-\frac{2 \kappa d \tilde{\beta}}{2\tilde{\beta} + 2d}}  \\
& \geq n^{- \frac{2\tilde{\beta} + d}{2\tilde{\beta} + 2 d}} n^{-\kappa'},
\end{align*}
by setting $\kappa' =  \kappa\frac{ 2d \tilde{\beta}}{2\tilde{\beta} + 2d}$.
This gives the assertion.
\end{proof}

\section{Proofs of Proposition \ref{prop:WeakConvergence}, Theorem \ref{thm:ExcessRiskConvRate} and Corollary \ref{cor:RefinedExcessBound}}
\label{sec:ProofConvDeep}


Proposition \ref{prop:WeakConvergence}, Theorem \ref{thm:ExcessRiskConvRate} and Corollary \ref{cor:RefinedExcessBound} can be shown by using Propositions \ref{prop:WeakConvergenceSuzuki} and \ref{thm:ExcessRiskConvRateSuzuki} given in Appendix \ref{sec:AuxLemmas} shown below.

Let $T^\alpha W = (\mu_m^{\alpha}\wm{1,m},\mu_m^{\alpha}\wm{2,m})_{m=1}^\infty$ for $W = (\wm{1,m},\wm{2,m})_{m=1}^\infty$ for $\alpha > 0$, and let us consider a model $h_{W} := f_{T^{-\alpha/2} W}$.
Then, the training error can be rewritten as 
$$
\calLhat(f_W) = \calLhat(h_{T^{\alpha/2}W}).
$$
For notational simplicity, we let $\calLhat(W) := \calLhat(f_W)$. 

Let $\calH^{(M)}$ be $\{W^{(M)} = (\wm{1,m},\wm{2,m})_{m=1}^M \mid \wm{1,m} \in \Real^{d+1},~\wm{2,m}\in \Real,~1\leq m \leq M \}$ and $\iota: \calH^{(M)} \to \calH$ be the zero padding of $W^{(M)}$, that is, 
$\iota(W^{(M)}) = (\wm{1,m}',\wm{2,m}')_{m=1}^\infty \in \calH$ satisfies 
$\wm{1,m}' = \wm{1,m},~\wm{2,m}'=\wm{2,m}~(m \leq M)$ and $\wm{1,m}' = 0,~\wm{2,m}'=0~(m > M)$.
Moreover, we define $\iota^*:\calH \to \calH^{(M)}$ as the map that extracts first $M$ components.
By abuse of notation, we write $f_{W^{(M)}}$ for $W^{(M)} \in \calH^{(M)}$ to indicate $f_{\iota(W^{(M)})}$.
Finally, let $A^{(M)}: \calH^{(M)} \to \calH^{(M)}$ be a linear operator such that $A^{(M)} W^{(M)} = \iota^*( A \iota(W^{(M)}))$,
which is just a truncation of $A$.
Similarly, let $T_M^a W^{(M)}$ for $W^{(M)} \in \calH^{(M)}$ be the operator corresponding to $T^a W$ for $W \in \calH$,
i.e., 
$T_M^a W^{(M)} = \iota^*(T^a \iota(W^{(M)}))$.

\subsection{Auxiliary lemmas}\label{sec:AuxLemmas}

First, we show some key propositions to show the main results.
To do so, we utilize the result by \cite{muzellec2020dimensionfree} and \cite{Suzuki:NIPS:2020}.

 \begin{Assumption}\label{ass:IGLDConvCond}~
{
\begin{assumenum}
\item \label{assum:eigenvalue_cvg}
There exists a constant $c_\mu$ such that $\mu_m \leq c_\mu m^{-2}$.
\item \label{assum:smoothness}
There exist $B,U > 0$ such that 
the following two inequalities hold for some $a \in (1/4,1)$ almost surely:   
\begin{align*}   
&\norm{\nabla  \calLhat(W)}_{\cH} \leq B~(\forall W \in \cH), \\
& \norm{\nabla \calLhat(W) - \nabla \calLhat(W')}_{\cH} \leq L \norm{W - W'}_{\calH_{-a}} ~(\forall W, W' \in \cH).
\end{align*}
\item 
\label{assum:C2_boundedness}
For any data $D_n$, $\calLhat$ is three times differentiable.
Let $\nabla^3 \calLhat(W)$ be the third-order derivative of $\calLhat(W)$.
This can be identified with a third-order linear form
and 
$\nabla^3 \calLhat(W) \cdot (h, k)$ denotes the Riesz representor of  $l \in \cH \mapsto \nabla^3 \calLhat(W) \cdot (h, k, l)$.
There exists $\alpha' \in [0, 1), C_{\alpha'} \in (0, \infty)$ such that 
$\forall W, h, k \in \cH,$ 
$
    \norm{\nabla^3 \calLhat(W) \cdot (h, k)}_{\calH_{-\alpha'}} \leq C_{\alpha'} \norm{h}_{\cH}\norm{k}_{\cH},~~\norm{\nabla^3 \calLhat(W) \cdot (h, k)}_{\cH} \leq C_{\alpha'} \norm{h}_{\calH_{\alpha'}}\norm{k}_{\cH}~~\text{(a.s.)}.
$
%
%
\end{assumenum}}
\end{Assumption}
\begin{Remark}
In the analysis of \cite{Brehier16,muzellec2020dimensionfree,Suzuki:NIPS:2020},  
Assumption \ref{ass:IGLDConvCond}-\ref{assum:C2_boundedness} is imposed for any finite dimensional projection $\calL(W^{(M)})$ as a function on $\calH^{(M)})$ for all $M \geq 1$ instead of $\calL(W)$ as a function of $\calH$.
However, the condition on $\calL(W)$ gives a sufficient condition for any finite dimensional projection in our setting.
Thus, we employed the current version.
\end{Remark}
\begin{Assumption}
\label{ass:BernsteinLightTail}
For the loss function $\ell(y,f(x)) = (y - f(x))^2$, the following conditions holds: 
\begin{assumenum}
\item 
There exists $C > 0$ 
such that for any $f _W~(W \in \calH)$, it holds that  
\begin{align*}
\EE_{X,Y}[(\ell(Y,f_W(X)) - \ell(Y,\fstar(X)))^2] \leq C (\calL(f_W) - \calL(\fstar)). 
\end{align*}
\item  $\beta > 0$ is chosen so that, for any $h:\Real^d \to \Real$ and $x \in \supp(P_X)$, it holds that 
\begin{align*}
\textstyle
\EE_{Y|X=x}\big[\exp\big(- \frac{\beta}{n}(\ell(Y,h(x)) - \ell(Y,\fstar(x)))\big)\big] \leq 1.
\end{align*}
\item There exists $L_h > 0$ such that 
$\|\nabla_W \ell(Y,h_W(X)) - \nabla_{W} \ell(Y,h_{W'}(X))\|_{\cH} \leq L_h \|W- W'\|_{\cH}~~(\forall W,W' \in \calH)$ 
almost surely.
\item 
There exists $C_h$ such that $\|h_W - h_{W'}\|_\infty \leq C_h \|W - W'\|_{\cH}~~(W,W'\in \calH)$.
\end{assumenum}
\end{Assumption}

\begin{Proposition}\label{prop:WeakConvergenceSuzuki}
Assume Assumption \ref{ass:IGLDConvCond} holds and $\beta > \eta$. 
Suppose that $\exists \Rbar > 0$, $0 \leq \ell(Y,f_{W}(X)) \leq \Rbar$ for any $W \in \cH$~(a.s.).
Let $\rho = \frac{1}{1 + \lambda\eta/\mu_1}$ and $b = \frac{\mu_1}{\lambda}B + \frac{c_\mu}{\beta \lambda}$. 
Accordingly, let $\bar{b} = \max\{b,1\}$, $\kappa = \bar{b} + 1$ and 
$\bar{V} = 4 \bar{b}/{\scriptstyle (\sqrt{(1+\rho^{1/\eta})/2} - \rho^{1/\eta})}$.
Then, the spectral gap of the dynamics is given by 
\begin{equation}\label{eq:SpectralGap}
\Lambda^*_\eta = \frac{\min\left(\frac{\lambda}{2 \mu_1}, \frac{1}{2} \right)}{4 \log(\kappa (\bar{V} + 1)/(1-\delta)) } \delta
\end{equation}
where $0 < \delta< 1$ is a real number satisfying $\delta = \Omega(\exp(-\Theta(\mathrm{poly}(\lambda^{-1})\beta)))$.
We define $\Lambda^*_0 = \lim_{\eta \to 0}\Lambda^*_\eta$ (i.e., $\bar{V}$ is replaced by $4\bar{b}/(\scriptstyle \sqrt{(1+\exp(-\frac{\lambda}{\mu_1}))/2} - \exp(-\frac{\lambda}{\mu_1}))$). 
We also define $\textstyle C_{W_0} = \kappa [\bar{V} + 1] + \frac{\sqrt{2} (\Rbar + b)}{\sqrt{\delta}}$.
Then, for any $0 < a < 1/4$, the following convergence bound holds for almost sure observation $D_n$:  
for either $L = \calL$ or $L = \calLhat$,
\begin{align} 
&| \EE_{W_k}[L(W_k) | D_n] 
-
\EE_{W\sim \pi_\infty}[L(W)|D_n] |  \\
&  \leq
C_1 \left[
C_{W_0}  \exp(- \Lambda_\eta^* \eta k )   + 
\frac{\sqrt{\beta}}{\Lambda^*_0}\eta^{1/2-a} \right] = \Xi'_k,
\end{align}
where $C_1$ is a constant depending only on $c_\mu,B,L,C_{\alpha'},a,\Rbar$ (independent of $\eta,k,\beta,\lambda$).
\end{Proposition}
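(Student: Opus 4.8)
The plan is to view $(W_k)_{k\ge 0}$, and each finite-dimensional truncation $(W_k^{(M)})$, as a homogeneous Markov chain whose one-step kernel composes the linear contraction $S_\eta=(\Id+\eta A)^{-1}$ with a Gaussian perturbation of scale $\sqrt{2\eta/\beta}$ and a gradient drift of size $\eta B$, and to compare it to the continuous-time dynamics \eqref{eq:ContinousLangevin} via the triangle inequality. Writing $\pi_\eta$ for the (unique) invariant distribution of the discrete chain and $\pi_\infty$ for that of \eqref{eq:ContinousLangevin}, I would split
\[
|\EE_{W_k}[L(W_k)|D_n]-\EE_{W\sim\pi_\infty}[L(W)|D_n]|
\le |\EE_{W_k}[L(W_k)]-\EE_{W\sim\pi_\eta}[L(W)]|
+ |\EE_{W\sim\pi_\eta}[L(W)]-\EE_{W\sim\pi_\infty}[L(W)]|,
\]
into the \emph{mixing} term and the \emph{discretization} term. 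Since $0\le\ell\le\Rbar$, both $L=\calL$ and $L=\calLhat$ are bounded by $\Rbar$ and Lipschitz in $W$ on sublevel sets of $\|W\|_{\calH}$, so it is enough to control the chain in a weighted total-variation / Wasserstein distance and then convert. This is exactly the abstract template of \cite{Brehier16,muzellec2020dimensionfree,Suzuki:NIPS:2020}, and the real work is to verify their hypotheses in the present setting and to track the constants of \eqref{eq:SpectralGap}.

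For the mixing term I would establish geometric ergodicity by the drift-plus-minorization (Harris) method. The map $S_\eta$ contracts every coordinate block, with worst-case factor $\rho=1/(1+\lambda\eta/\mu_1)$ coming from the smallest ``eigenvalue'' $\lambda/\mu_1$ of $A$; together with the uniform bound $\|\nabla\calLhat\|_\calH\le B$ (Assumption \ref{ass:IGLDConvCond}\ref{assum:smoothness}) and the trace-class noise $S_\eta\xi_k$ (finite second moment precisely because $\mu_m\lesssim m^{-2}$, Assumption \ref{ass:IGLDConvCond}\ref{assum:eigenvalue_cvg}), this yields a one-step drift inequality for $V(W)=\|W\|_{\calH}^2$ with the constants $b,\bar b,\kappa,\bar V$ of the statement. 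On each sublevel set $\{V\le\mathrm{const}\}$ the Gaussian increment has a density bounded below, giving a minorization condition whose constant $\delta$ is exponentially small in $\mathrm{poly}(\lambda^{-1})\beta$, since the relevant ball radius grows polynomially in $\lambda^{-1}$ while the noise has temperature $\beta^{-1}$. Feeding the drift and minorization constants into a quantitative Harris theorem (as in \cite{Hairer:2002}, or the explicit version used in \cite{Suzuki:NIPS:2020}) produces the geometric rate $\Lambda_\eta^*$ of \eqref{eq:SpectralGap}, so the mixing term is at most $C\,C_{W_0}\exp(-\Lambda_\eta^*\eta k)$, with $C_{W_0}$ absorbing the initial Lyapunov value (here $W_0=0$) through $\bar V$ and the stationary moment bound $\sqrt2(\Rbar+b)/\sqrt\delta$.

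For the discretization term I would invoke the weak-error analysis of the semi-implicit (exponential) Euler scheme for the infinite-dimensional Langevin SDE \eqref{eq:ContinousLangevin}, whose invariant measure satisfies $\frac{\dd\pi_\infty}{\dd\nu_\beta}\propto\exp(-\beta\calLhat)$ with $\nu_\beta$ Gaussian of covariance $(\beta A)^{-1}$ --- trace class again thanks to $\mu_m\lesssim m^{-2}$, which is what makes the whole construction well posed (cf. \cite{DaPrato_Zabczyk92}). Expanding the Kolmogorov backward operator along the scheme and using the smoothing of $S_\eta$ together with the first- and third-order bounds $B$ and $C_{\alpha'}$ on $\calLhat$ (Assumption \ref{ass:IGLDConvCond}\ref{assum:smoothness}--\ref{assum:C2_boundedness}) to dominate the second- and third-order remainders, one gets a local weak error of order $\eta^{3/2-a}$; summing over the trajectory against the continuous-time spectral gap $\Lambda^*_0$ gives $|\EE_{W\sim\pi_\eta}[L]-\EE_{W\sim\pi_\infty}[L]|\lesssim\frac{\sqrt\beta}{\Lambda^*_0}\eta^{1/2-a}$. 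The exponent $1/2-a$ is the $p=2$ instance of the general $\eta^{(p-1)/p-a}$ rate for the regularizer $\|W\|^2_{\calH_{p/2}}$ (\cite{Andersson2016,Brehier16}), the $1/4$-loss ``$a$'' being the price of a merely-$C^3$ loss and a bounded-Lipschitz (rather than smooth) test function.

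Combining the two estimates yields $\Xi'_k$ with $C_1$ depending only on $c_\mu,B,L,C_{\alpha'},a,\Rbar$, and running the identical argument on each truncation gives the claim for $W_k^{(M)}$ with $M$-independent constants. I expect the main obstacle to be the mixing estimate: one must carry out the drift/minorization argument uniformly over all $\calH^{(M)}$ so that a single $\Lambda_\eta^*$ is valid, and carefully quantify the minorization constant $\delta$, which is the sole source of the (unavoidable) exponential-in-$\beta$ blow-up of the mixing time; a secondary technical point is justifying the weak-error expansion for a non-smooth loss and a non-smooth test function, which I would handle by a mollification argument as in \cite{Brehier16}. Verifying Assumption \ref{ass:IGLDConvCond} from Assumption \ref{ass:sigmamam}, and the bookkeeping for $\rho,b,\bar b,\kappa,\bar V,C_{W_0}$, are routine.
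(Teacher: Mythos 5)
The paper does not actually prove Proposition \ref{prop:WeakConvergenceSuzuki}: it is stated as an imported result, with the argument deferred entirely to \cite{Brehier16,muzellec2020dimensionfree,Suzuki:NIPS:2020}, and your sketch is a faithful reconstruction of exactly the machinery those references use (drift-plus-minorization for geometric ergodicity of the semi-implicit chain with the stated constants $\rho,b,\bar b,\kappa,\bar V,\delta$, plus a Br\'ehier-type weak-error expansion giving the $\sqrt{\beta}\,\eta^{1/2-a}/\Lambda_0^*$ discretization term). So the approach matches what the paper relies on; the one point worth flagging is the one you already identify as the main obstacle, namely that the total-variation minorization step in infinite dimensions requires checking that the one-step mean shift lies in the Cameron--Martin space of the coloured noise $S_\eta\xi_k$, which is where the trace-class condition $\mu_m\lesssim m^{-2}$ is used beyond mere moment bounds.
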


\begin{Proposition}\label{thm:ExcessRiskConvRateSuzuki}
Assume that Assumptions \ref{ass:IGLDConvCond} and \ref{ass:BernsteinLightTail} hold.
Let 
$\alphatil := 1/\{2(\alpha+1)\}$ for a given $\alpha > 0$ and $\theta$ be an arbitrary real number satisfying $0 < \theta < 1 - \alphatil$.
Assume that the true function $\ftrue$ can be represented by $h_{\Wstar} = \ftrue$ for $\Wstar \in \calH_{\theta(\alpha + 1)}$.
Then, if $M \geq  \min\left\{\lambda^{\alphatil/2[\theta(\alpha + 1)]} \beta^{1/2[\theta(\alpha + 1)]}, \lambda^{-1/2(\alpha + 1)},n^{1/2[\theta(\alpha + 1)]}\right\}$, 
the expected excess risk is bounded by
\begin{align}
& \EE_{D^n}\left[ \EE_{W_k^{(M)}}[ \calL(h_{T_M^{\alpha/2}W_k^{(M)}}) |D_n] - 
\calL(\ftrue) \right] \notag \\
& \leq  
 C 
\max \big\{  (\lambda\beta)^{\frac{2\alphatil/\theta}{1 + \alphatil/\theta}} n^{-\frac{1}{1 + \alphatil/\theta}},
\lambda^{-\alphatil} \beta^{-1}, \lambda^{\theta}, 1/n \big\} + \Xi_k',
\label{eq:ExpectedLossConvRateSuzuki}
\end{align}
where $C$ is a constant independent of $n,\beta,\lambda,\eta,k$.
\end{Proposition}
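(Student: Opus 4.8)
The plan is to follow the route of \cite{muzellec2020dimensionfree,Suzuki:NIPS:2020}, in two stages: reduce the discrete-time iterate to the stationary (Bayes-posterior) distribution, for which Proposition \ref{prop:WeakConvergenceSuzuki} does all the work, and then run a PAC--Bayes style bound on the excess risk of that posterior, arranged so as to give a fast rate. First I would apply Proposition \ref{prop:WeakConvergenceSuzuki} with $L=\calL$: since $h_{T_M^{\alpha/2}W_k^{(M)}}=f_{W_k^{(M)}}$ and $\ftrue=h_{\Wstar}=f_{T^{-\alpha/2}\Wstar}$, this replaces $\EE_{W_k^{(M)}}[\calL(h_{T_M^{\alpha/2}W_k^{(M)}})\mid D_n]$ by $\EE_{W\sim\pi_\infty^{(M)}}[\calL(f_W)\mid D_n]$ up to the additive error $\Xi_k'$, where $\pi_\infty^{(M)}$ is the $M$-dimensional stationary measure. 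The point is that $\pi_\infty^{(M)}\propto \exp(-\beta\calLhat(f_W))\,\dd\nu_\beta^{(M)}(W)$ is exactly the Bayes posterior for the Gaussian(-process) prior $\nu_\beta$ with covariance $(\beta A)^{-1}$, whose Cameron--Martin norm equals $\sqrt{\beta\lambda}\,\|\cdot\|_{\calH_1}$, so it remains to bound $\EE_{D_n}\EE_{W\sim\pi_\infty^{(M)}}[\calL(f_W)]-\calL(\ftrue)$.

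For that I would exploit the variational characterization of the Gibbs measure: $\pi_\infty^{(M)}$ minimizes $\rho\mapsto \beta\,\EE_\rho[\calLhat(f_W)]+\mathrm{KL}(\rho\,\|\,\nu_\beta^{(M)})$ over probability measures $\rho$ on $\calH^{(M)}$. Feeding this identity into a Donsker--Varadhan change-of-measure argument, controlled by the one-sided exponential-moment (light-tail) condition Assumption \ref{ass:BernsteinLightTail}(ii), passes from the empirical to the population risk; the Bernstein-type variance condition Assumption \ref{ass:BernsteinLightTail}(i) is then what lets the resulting second-order term $\EE_{X,Y}[(\ell(Y,f_W(X))-\ell(Y,\fstar(X)))^2]$ be absorbed back into a first-order term $\calL(f_W)-\calL(\ftrue)$, which is precisely the mechanism that upgrades the rate past $O(1/\sqrt n)$. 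I would thereby obtain, for every comparison measure $\rho$,
\[
\EE_{D_n}\EE_{\pi_\infty^{(M)}}[\calL(f_W)]-\calL(\ftrue)\ \lesssim\ \big(\EE_\rho[\calL(f_W)]-\calL(\ftrue)\big)+\frac{\mathrm{KL}(\rho\,\|\,\nu_\beta^{(M)})}{\beta}+\frac1n,
\]
the $1/n$ covering the passage from the infinite-dimensional prior to its $M$-truncation (this is where the lower bound on $M$ and the decay $\mu_m\le c_\mu m^{-2}$ enter, to control the discarded tail).

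The final step is to instantiate $\rho$ as a Gaussian localized near a truncated version of $T^{-\alpha/2}\Wstar$: take $\rho$ centred at $W_0:=T^{-\alpha/2}\Wstar_{\le m_0}$ (the first-$m_0$-coordinate truncation, so that $f_{W_0}=h_{\Wstar_{\le m_0}}$ is close to $\ftrue=h_{\Wstar}$), with covariance obtained from $\nu_\beta^{(M)}$ by contracting its first $m_0$ components, and with $m_0\le M$ a free integer to be optimized. Then the translation part of $\mathrm{KL}(\rho\,\|\,\nu_\beta^{(M)})$ is $\tfrac{\beta\lambda}{2}\|\Wstar_{\le m_0}\|_{\calH_{\alpha+1}}^2\lesssim \beta\lambda\,\mu_{m_0}^{-(1-\theta)(\alpha+1)}$ by $\Wstar\in\calH_{\theta(\alpha+1)}$ with $\|\Wstar\|_{\calH_{\theta(\alpha+1)}}\le1$, its covariance part scales with the number $m_0$ of contracted components, and $\EE_\rho[\calL(f_W)]-\calL(\ftrue)$ splits into the truncation bias $\|h_{\Wstar_{\le m_0}}-\ftrue\|_{\LPiPx}^2\lesssim \mu_{m_0}^{\theta(\alpha+1)}$ (via the Lipschitz bound $C_h$ of Assumption \ref{ass:BernsteinLightTail}(iv)) plus the fluctuation of $f_W$ around its centre, bounded by a constant times the trace of $\rho$'s covariance paired with the model's Lipschitz/third-derivative constants. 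Optimizing $m_0$ to balance these contributions (truncation bias, $\beta^{-1}\mathrm{KL}$, and the fluctuation/$1/n$ remainder of the fast-rate step) yields the maximum of $(\lambda\beta)^{\frac{2\alphatil/\theta}{1+\alphatil/\theta}}n^{-\frac{1}{1+\alphatil/\theta}}$, $\lambda^{-\alphatil}\beta^{-1}$, $\lambda^\theta$ and $1/n$ in \eqref{eq:ExpectedLossConvRateSuzuki}; adding $\Xi_k'$ back finishes the bound.

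The step I expect to be the main obstacle is the fast-rate PAC--Bayes argument of the second paragraph carried out in infinite dimensions: the Donsker--Varadhan/exponential-moment bookkeeping has to be run against a genuine Gaussian-process prior rather than a fixed finite-dimensional Gaussian, and one must check that the one-sided light-tail condition together with the Bernstein condition really closes to a fast rate while keeping all constants uniform in $M,\beta,\lambda,n$ (this uniformity is exactly what makes the width requirement on $M$ harmless). A secondary but non-trivial point is verifying that the abstract smoothness and third-derivative hypotheses of Assumptions \ref{ass:IGLDConvCond}--\ref{ass:BernsteinLightTail} genuinely hold for the clipped two-layer model $h_W$, so that the trace estimates invoked above are legitimate.
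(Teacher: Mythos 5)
Your proposal is correct and follows essentially the same route as the paper: reduction to the stationary Gibbs measure via Proposition~\ref{prop:WeakConvergenceSuzuki}, then a posterior-contraction bound for the Gaussian prior with covariance $(\beta A)^{-1}$ exploiting the Bernstein/light-tail conditions for the fast rate, then the finite-width truncation argument. The only difference is packaging: the paper imports the core bound from \cite{Suzuki:NIPS:2020} in terms of the concentration function $\phi^{(M)}_{\beta,\lambda}$ and critical radius $\epsilonstar$, whereas you unpack that black box into the underlying Donsker--Varadhan/PAC--Bayes computation with a Gaussian comparison measure --- your Cameron--Martin translation cost and covariance-contraction terms are exactly the two summands of $\phi^{(M)}_{\beta,\lambda}$.
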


\begin{proof}

Repeating the same argument in Proposition \ref{prop:WeakConvergence} and using the same notation,
Proposition \ref{prop:WeakConvergenceSuzuki} gives 
$$
| \EE_{W_k^{(M)}}[\calL(W_k^{(M)}) | D_n] 
-
\EE_{W\sim \pi_\infty^{(M)}}[\calL(W)|D_n] | \leq \Xi_k',
$$
for any $1 \leq M \leq \infty$. Therefore, we just need to bound the following quantity: 
$
\left|
 \EE_{D^n}\left[ \EE_{W^{(M)}\sim \pi_\infty^{(M)}}[ \calL(h_{T_M^{\alpha/2}W^{(M)}}) |D_n] \right]- 
 \calL(\ftrue)\right|$.

We define $\|W^{(M)}\|_{\cH^{(M)}} := \|\iota^*(W^{(M)})\|_{\cH}$ for $W^{(M)} \in \cH^{(M)}$.
For $a > 0$, we define $\cH_{a}^{(M)}$ be the projection of $\cH_{a}$ to the first $M$ components, 
$\cH_{a}^{(M)} = \{ \iota(W) \mid W \in \cH_a\}$, and we define $\|W^{(M)}\|_{\cH_a^{(M)}} := \|\iota^*(W^{(M)})\|_{\cH_a}$
(note that since $\cH_{a}^{(M)}$ is a finite dimensional linear space, it is same as $\calH$ as a set). 
Let $\nu^{(M)}_\beta$ be the Gaussian measure on $\cH^{(M)}$ with mean 0 and covariance $(\beta A^{(M)})^{-1}$,
and $\nutil^{(M)}$ be the Gaussian measure corresponding to the random variable $T_{M}^{\alpha/2} W^{(M)}$
with $W^{(M)} \sim \nu^{(M)}_\beta$.
Let the concentration function be 
\begin{align*}
& \phi_{\beta,\lambda}^{(M)}(\epsilon)  := \inf_{\substack{W \in \cH^{(M)}_{\alpha + 1}: \\ \calL(h_W) - \calL(\ftrue) \leq \epsilon^2}} \beta \lambda \|W\|^2_{\cH^{(M)}_{\alpha + 1}}
- \log \nutil^{(M)}(\{W \in \cH^{(M)} : \|W\|_{\cH^{(M)}} \leq \epsilon \}) + \log(2),
\end{align*}
where, if there does not exist $W \in \cH^{(M)}_{\alpha + 1}$ that satisfies the condition $\inf$, 
then we define $\phi^{(M)}_{\beta,\lambda}(\epsilon) = \infty$,
then 
Let $\epsilonstar > 0$ be 
$$
\epsilonstar := \max\{\inf\{\epsilon > 0 \mid \phi_{\beta,\lambda}(\epsilon) \leq \beta \epsilon^2\}, 1/n\}.
$$
Then, \cite{Suzuki:NIPS:2020} showed the following bound:
\begin{align}\label{eq:ExpectedLossConvRateSuzuki_proof}
& \left| \EE_{D^n}\left[ \EE_{W^{(M)}\sim \pi_\infty^{(M)}}[ \calL(h_{T_{(M)}^{\alpha/2}W^{(M)}}) |D_n] - 
\calL(\ftrue) \right] \right| \notag \\
& 
\leq  
 C \max\Big\{ 
\epsilonstar^2 ,
\big(\tfrac{ \beta }{n} \epsilonstar^2  + n^{-\frac{1}{1+\alphatil/\theta}} (\lambda\beta)^{\frac{2\alphatil/ \theta}{1+\alphatil/\theta}} \big), \frac{1}{n}
\Big\}. 
\end{align}
They also showed that, for $M = \infty$, it holds that   
\begin{align*}
\epsilonstar^2 \lesssim \max\left\{(\lambda \beta)^{-\alphatil} \beta^{-(1-\alphatil)}, \lambda^{\theta},n^{-1}\right\}
=  \max\left\{\lambda^{-\alphatil} \beta^{-1}, \lambda^{\theta},n^{-1}\right\}.
\end{align*}
Substituting this bound of $\epsilonstar$ to \Eqref{eq:ExpectedLossConvRateSuzuki_proof}, we obtain \Eqref{eq:ExpectedLossConvRateSuzuki} for $M = \infty$.
Moreover, in their proof, if $M \geq (\epsilonstar)^{-1/[\theta(\alpha + 1)]}$, then 
$$
 \inf_{\substack{W \in \cH^{(M)}_{\alpha + 1}: \\ \calL(h_W) - \calL(\ftrue) \leq \epsilon^2}} \beta \lambda \|W\|^2_{\cH^{(M)}_{\alpha + 1}} \lesssim \beta (\epsilonstar)^2.
$$
Finally, since $\nutil^{(M)}$ is a marginal distribution of $\nutil^{(\infty)}$, it holds that 
$$
- \log \nutil^{(M)}(\{W \in \cH^{(M)} : \|W\|_{\cH^{(M)}} \leq \epsilon \})  \leq 
- \log \nutil^{(\infty)}(\{W \in \cH : \|W\|_{\cH} \leq \epsilon \}). 
$$
Therefore, as long as $M \geq (\epsilonstar)^{-1/[\theta(\alpha + 1)]}$, the rate of $\epsilonstar$ is not deteriorated from $M = \infty$.
In other words, if $M \geq  \min\left\{\lambda^{\alphatil/2[\theta(\alpha + 1)]} \beta^{1/2[\theta(\alpha + 1)]}, \lambda^{-\theta/2[\theta(\alpha + 1)]},n^{1/2[\theta(\alpha + 1)]}\right\}$, 
the bound \eqref{eq:ExpectedLossConvRateSuzuki} holds.
%
\end{proof}
\begin{Remark}
\cite{Suzuki:NIPS:2020} showed Proposition \ref{thm:ExcessRiskConvRateSuzuki} under a condition $\alpha > 1/2$.
However, this is used only to ensure Assumption \ref{ass:IGLDConvCond}.
In our setting, we can show Assumption \ref{ass:IGLDConvCond} holds directly and thus we may omit the condition $\alpha > 1/2$.
\end{Remark}

\subsection{Proofs of Proposition \ref{prop:WeakConvergence}, Theorem \ref{thm:ExcessRiskConvRate} and Corollary \ref{cor:RefinedExcessBound}}

Here, we give the proofs of Proposition \ref{prop:WeakConvergence} and Theorem \ref{thm:ExcessRiskConvRate} simultaneously.
\begin{proof}[Proof of Proposition \ref{prop:WeakConvergence} and Theorem \ref{thm:ExcessRiskConvRate}]
Let $\Rbar =  (2 \sum_{m=1}^\infty a_m R + U)^2$. Then, we can easily check that $(y_i - f_W(x_i))^2 \leq \Rbar$.
As stated above, we use Propositions \ref{prop:WeakConvergenceSuzuki} and \ref{thm:ExcessRiskConvRateSuzuki} to show the statements.

First, we show Proposition \ref{prop:WeakConvergence} for the dynamics of $W_k^{(M)}$ for any $1 \leq M \leq \infty$.
However, it suffices to show the statement only for $M = \infty$ because the finite dimensional version can be seen as a specific case of the infinite dimensional one.
Actually, the dynamics of $W_k^{(M)}$ is same as that of $\iota(\tilde{W}_k)$ where $\tilde{W}_k \in \calH$ obeys the following dynamics:
$$
\tilde{W}_{k+1} = S_\eta\left(\tilde{W}_{k} - \eta \nabla \calLhat(f_{\iota(\tilde{W}_{k})}) + \sqrt{\frac{2 \eta}{\beta}} \xi_{k} \right).
$$
This is because $f_{\iota(\tilde{W}_k)}$ is determined by only the first $M$ components $\iota(\tilde{W}_{k})$, 
$\iota(\nabla \calLhat(f_{\iota(\tilde{W}_{k})})) = \nabla_{W^{(M)}} \calLhat(f_{W^{(M)}})|_{W^{(M)} = \iota(\tilde{W}_k)}$
and $S_\eta$ is a diagonal operator. 
Since the components of $\tilde{W}_k$ with indexes higher than $M$ does not affect the objective,
smoothness of the objective is not lost.
The stationary distribution $\pi_\infty^{(M)}$ of the continuous dynamics corresponding to $W^{(M)}$ is a probability measure on $\calH^{(M)}$ that satisfies 
$$
\frac{\dd \pi_\infty^{(M)}}{\dd \nu^{(M)}_\beta}(W^{(M)}) \propto \exp(-\beta \calLhat(f_{W^{(M)}})),
$$ 
where $\nu^{(M)}_\beta$ is the Gaussian measure on $\Real^{M\times (d+2)}$ with mean 0 and covariance $(\beta A^{(M)})^{-1}$.
We can notice that this is the marginal distribution of the stationary distribution of the continuous time counterpart of $\tilde{W}_k$: $\dd \tilde{\pi}_\infty(\tilde{W}) \propto \exp(-\beta \calLhat(f_{\iota(\tilde{W})}))\dd \nu_\beta$.
Therefore, we just need to consider an infinite dimensional one. 
For this reasoning, we show the convergence for the original infinite dimensional dynamics $(W_k)_{k=1}^\infty$.
The convergence of the finite dimensional one $(W^{(M)}_k)_{k=1}^\infty$ can be shown by the same manner using the argument above.

To show Proposition \ref{prop:WeakConvergence}, we use Propositions \ref{prop:WeakConvergenceSuzuki}.
To do so, we need to check validity of Assumptions \ref{ass:IGLDConvCond}.
First, we check Assumption \ref{ass:IGLDConvCond}.
Assumption \ref{ass:IGLDConvCond}-\ref{assum:eigenvalue_cvg} is ensured by Assumption \ref{ass:sigmamam}.
Next, we check Assumption \ref{ass:IGLDConvCond}-\ref{assum:smoothness}.
The boundedness of the gradient can be shown as follows:
\begin{align*}
& \| \nabla \calLhat(f_W)\|_{\calH}^2 \\
=  
&\sum_{m=1}^\infty 
\Big( 
\Big\| \frac{1}{n}\sum_{i=1}^n 2 (f_W(x_i) - y_i) \wmb{2,m} a_m [x_i;1] \sigma_m'(\wm{1,m}^\top [x_i;1]) \Big\|^2  \\
& ~~+ 
\Big| \frac{1}{n}\sum_{i=1}^n 2 (f_W(x_i) - y_i) a_m  \tanh'(\wm{2,m}/R) \sigma_m(\wm{1,m}^\top [x_i;1]))_{m=1}^\infty \Big|^2 
\Big) \\
\leq 
& 
\sum_{m=1}^\infty 4 \Rbar R^2 a_m^2 (d+1) C_\sigma^2
+ 4 \Rbar a_m^2 \\
& ~~~~(\because |f_W(x_i) - y_i |\leq \Rbar,~ \|\sigma_m'\|_\infty \leq C_\sigma,~\|\tanh'\|_\infty \leq 1)\\
\leq
& 4 \Rbar [R^2C_\sigma^2 (d+1)+1] \sum_{m=1}^\infty a_m^2 < \infty. 
\end{align*}
Similarly, we can show the Lipschitz continuity of the gradient as 
\begin{align*}
& \| \nabla \calLhat(f_W) - \nabla \calLhat(f_{W'})\|_{\calH}^2 \\
& \leq 
\sum_{m=1}^\infty \mu_m^{-2\aone} \mu_m^{2\aone} 
\Big\{ 4 \Rbar a_m^2 (d+1) C_\sigma^2[(\wm{2,m} - \wm{2,m}')^2 + R^2 \|\wm{1,m} - \wm{1,m}'\|^2] \\
& ~~~~+ 4 \Rbar a_m^2 [(\wm{2,m} - \wm{2,m}')^2/R^2 + C_\sigma^2(d+1) \|\wm{1,m} - \wm{1,m}'\|^2]  
\Big\}~~~~(\because \|\tanh''\|_\infty \leq 1) \\
& \leq 
4 \Rbar [(d+1) C_\sigma^2 (1+R^2) + 1/R^2 + C_\sigma^2(d+1)]
\max_{m \in \Natural}\{ \mu_m^{-2\aone} a_m^2 \} \\
& ~~~\times
\sum_{m=1}^\infty \mu_m^{2\aone} 
[(\wm{2,m} - \wm{2,m}')^2 + \|\wm{1,m} - \wm{1,m}'\|^2] \\
& \lesssim 
\|W - W'\|_{\calH_{-\aone}}^2.
\end{align*}
We can also verify Assumption \ref{ass:IGLDConvCond}-\ref{assum:C2_boundedness} in a similar way.
Then, we have verified Assumption \ref{ass:IGLDConvCond}.
Therefore, we may apply Proposition \ref{prop:WeakConvergenceSuzuki}, and then we obtain Proposition \ref{prop:WeakConvergence}.

Next, we show Theorem \ref{thm:ExcessRiskConvRate} by using Proposition \ref{thm:ExcessRiskConvRateSuzuki}.
For that purpose, we need to we verify Assumption \ref{ass:BernsteinLightTail}.
The first condition can be verified as  
\begin{align*}
& \EE_{X,Y}[((Y - f_W(X))^2 - (Y - \ftrue(X))^2)^2] \\
& = \EE_{X,\epsilon}[((\ftrue(X) + \epsilon - f_W(X))^2 - \epsilon^2)^2] \\
& = \EE_X[( (\ftrue(X) - f_W(X))^2 + 2 \epsilon (\ftrue(X) - f_W(X)))^2] \\
& = \EE_X[(\ftrue(X) - f_W(X))^4 + 2 \epsilon (\ftrue(X) - f_W(X))(\ftrue(X) - f_W(X))^2 + \epsilon^2 (\ftrue(X) - f_W(X))^2] \\
& = \|\ftrue - f_W\|_{\infty}^2\EE_X[(\ftrue(X) - f_W(X))^2]   + U^2 \EE_X[(\ftrue(X) - f_W(X))^2] \\
& \leq  \Rbar \EE_X[(\ftrue(X) - f_W(X))^2]
= \Rbar (\calL(f_W) - \calL(\ftrue)).\\
\end{align*}
The second condition can be checked as follows.
Note that 
\begin{align*}
& \EE_{Y|X=x}\left(\exp\left\{- \frac{\beta}{n}[(Y - f_W(x))^2 - (Y - \ftrue(x))^2]\right\} \right)\\
& = 
\EE_{\epsilon}\left(\exp\left[- \frac{\beta}{n}(\ftrue(x) - f_W(x))^2 - 2 \epsilon (f_W(x) - \ftrue(x))]\right\} \right)\\
& =
\exp\left[- \frac{\beta}{n}(\ftrue(x) - f_W(x))^2\right] \EE_{\epsilon}\left\{\exp\left[\frac{2 \beta}{n}  \epsilon (f_W(x) - \ftrue(x))\right] \right\}\\
& \leq
\exp\left[- \frac{\beta}{n}(\ftrue(x) - f_W(x))^2\right] 
\exp\left[\frac{1}{8}\frac{4 \beta^2}{n^2}  4U^2 (f_W(x) - \ftrue(x))^2 \right].
\end{align*}
Thus, under the condition $\beta \leq n/(2U^2)$, the right hand side can be upper bounded by 
$$
\exp\left[ - \frac{\beta}{n} \left(1 - 2 \frac{U^2 \beta}{n} \right) (f_W(x) - \ftrue(x))^2 \right] 
\leq 1.
$$
Next, we check the third and fourth conditions. 
Noting that 
\begin{align*}
& \nabla_W h_W(X) 
 \\
= & 
\Big( 
a_m \overline{(\mu_m^{-\alpha/2}\wm{2,m})}   \mu_m^{-\alpha/2} [x_i;1] \sigma_m'(\mu_m^{-\alpha/2}\wm{1,m}^\top [x_i;1]),  \\
& 
~~~ a_m \mu_m^{-\alpha/2}  \tanh'(\mu_m^{-\alpha/2} \wm{2,m}/R) \sigma_m(\mu_m^{-\alpha/2} \wm{1,m}^\top [x_i;1]))_{m=1}^\infty  
\Big)_{m=1}^\infty,
\end{align*}
we have that 
\begin{align*}
& \|\nabla_{W}h_{W}(X)\|_{\calH}^2\\
& \leq \sum_{m=1}^\infty  a_m^2 \mu_m^{-\alpha} [(d+1) R^2 C_\sigma^2 +  1] \\
& \leq [(d+1) R^2 C_\sigma^2 +  1]  \sum_{m=1}^\infty \mu_m^{-\alpha + 2\aone} \\
& \leq [(d+1) R^2 C_\sigma^2 +  1]  c_\mu^{-\alpha + 2\aone} \sum_{m=1}^\infty m^{-2(-\alpha + 2\aone)}
=: C_1 < \infty \\
& ~~~~(\because -\alpha + 2\aone = \aone > 1/2),
\end{align*}
and 
\begin{align*}
& \|\nabla_{W}h_{W}(X) - \nabla_{W}h_{W'}(X)\|_{\calH}^2\\
& \leq \sum_{m=1}^\infty  a_m^2 \mu_m^{-\alpha} (d+1) [\mu_m^{-\alpha}(\wm{2,m} -\wm{2,m}')^2  + R^2 \mu_m^{-\alpha}
\|\wm{1,m} - \wm{1,m}'\|^2] \\
& ~~+  a_m^2 \mu_m^{-\alpha} [\mu_m^{-\alpha}(\wm{2,m} - \wm{2,m}')^2/R^2 + C_\sigma^2 (d+1)\mu_m^{-\alpha}\|\wm{1,m} - \wm{1,m}'\|^2] \\
& 
\leq 
\sum_{m=1}^\infty  a_m^2 \mu_m^{-2 \alpha}[(d+1)(1+R^2) + 1/R^2 + C_\sigma^2 (d+1)]
[\|\wm{1,m} - \wm{1,m}'\|^2 + (\wm{2,m} -\wm{2,m}')^2] \\
& \leq 
c_\mu^{2\aone} \max_m\{ \mu_m^{2 (\aone -\alpha)}\} [(d+1)(1+R^2) + 1/R^2 + C_\sigma^2 (d+1)] \|W - W'\|_{\calH}^2
=: C_2 \|W - W'\|_{\calH}^2,
\end{align*}
for a constant $0 < C_2 < \infty$.
Therefore, it holds that 
\begin{align*}
|h_W(X) - h_{W'}(X)|^2 \leq C_1 \|W - W'\|_{\calH}^2, 
\end{align*}
which yields the forth condition, and we also have
\begin{align*}
&\| \nabla_W \ell(Y,h_W(X)) - \nabla_W \ell(Y,h_{W'}(X))\|_{\calH}^2  \\
= &
\|2(h_W(X) - Y )\nabla_W h_W(X) - 2(h_{W'}(X) - Y)\nabla_{W}h_{W'}(X)\|_{\calH}^2 \\
\leq 
& 
2 \|2(h_W(X) - Y ) (\nabla_W h_W(X) - \nabla_W h_W(X))\|_{\calH}^2  \\
& + 2 \|2(h_{W}(X) - h_{W'}(X)) \nabla_{W}h_{W'}(X)\|_{\calH}^2 \\
& \leq 8 \Rbar C_2 \|W - W'\|_{\calH}^2
+ 8 C_1^2  \|W - W'\|_{\calH}^2 \lesssim \|W - W'\|_{\calH}^2,
\end{align*}
which yields the third condition.

Since $\ftrue \in \calF_\gamma$, there exists $\Wstar \in \calH_\gamma$ such that $\ftrue = f_{\Wstar}$.
Therefore, applying Proposition \ref{thm:ExcessRiskConvRateSuzuki} with $\alpha = \aone$ ($\alphatil = 1/[2(\aone + 1)]$) and $\theta = \gamma/(1 + \aone)$ (since $\gamma < 1/2+\aone$, the condition $\theta < 1- \alphatil$ is satisfied),
we obtain that 
for $M \geq  \min\left\{\lambda^{1/4\gamma(\aone + 1)} \beta^{1/2\gamma}, \lambda^{-1/2(\aone + 1)},n^{1/2\gamma}\right\}$, 
the following excess risk bound holds: 
\begin{align*}
\EE_{D^n}\left[ \EE_{W_k^{(M)}}[\calL(W_k^{(M)})|D_n] - \calL(\fstar) \right]
\lesssim 
\max \big\{  (\lambda\beta)^{\frac{2\alphatil/\theta}{1 + \alphatil/\theta}} n^{-\frac{1}{1 + \alphatil/\theta}},
\lambda^{-\alphatil} \beta^{-1}, \lambda^{\theta}, 1/n \big\} + \Xi_k.
\end{align*}
Finally, by noting $\calL(W_k^{(M)}) - \calL(\fstar) = \|f_{W_k^{(M)}} - \fstar \|_{\LPiPx}^2$, we obtain the assertion.
\end{proof}

Finally, we give the proof of Corollary \ref{cor:RefinedExcessBound}.
\begin{proof}[Proof of Corollary \ref{cor:RefinedExcessBound}]
Note that 
\begin{align*}
& f_W(x) \\
& = \sum_{m=1}^\infty a_m \wmb{2,m}  \sigma_m(\wm{1,m}^\top [x;1]) \\
& = \sum_{m=1}^\infty \mu_m^{\aone} \wmb{2,m} \mu_m^{q\atwo} \mu_m^{-q\atwo} \mu_m^{s\atwo}\sigma(\mu_m^{-\atwo}\wm{1,m}^\top [x;1]) ~~~(\because a_m = \mu_m^{\aone},~b_m =\mu_m^{\atwo})\\
& = \sum_{m=1}^\infty \mu_m^{\aone + q\atwo} \wmb{2,m} \mu_m^{-(s -q)\atwo } \sigma(\mu_m^{-\atwo}\wm{1,m}^\top [x;1]).
\end{align*}
Therefore, we may redefine $\aone' \leftarrow \aone + q\atwo$ and $s' \leftarrow s-q$
so that we obtain another representation of the model $\calF_\gamma$:
$$
\calF_\gamma = \left\{f_W(x) = \sum_{m=1}^\infty \mu_m^{\aone'} \wmb{2,m} \check{\sigma}_m(\wm{1,m}^\top [x;1]) ~\Big|~ W \in \calH_\gamma,~\|W\|_{\calH_\gamma} \leq 1 \right\},
$$
where $\check{\sigma}_m(\cdot) = \mu_m^{-s'\atwo } \sigma(\mu_m^{-\atwo} \cdot)$.
Note that the condition $0\leq q \leq s- 3$ gives $s-q \geq 3$. 
Therefore, Assumptions \ref{ass:IGLDConvCond} and \ref{ass:BernsteinLightTail} are valid even for the redefined parameters $\aone'$, $s'$ and $\check{\sigma}_m$ instead of $\aone$, $s$ and $\sigma_m$.
Therefore, we can apply Theorem \ref{thm:ExcessRiskConvRate} by simply replacing $\aone$ by $\aone' = \aone + q\atwo$.
\end{proof}

\end{document}